\newcommand{\E}{\mathbf{E}}
\newcommand{\Prob}{\mathbf{P}}
\newcommand{\hist}{\mathcal{F}_{t}}
\newcommand{\A}{\mathcal{A}}
\newcommand{\Y}{\mathcal{Y}}
\newcommand{\N}{\mathbb{N}_0}
\newcommand{\Ent}{\mathbf{H}}
\newcommand{\I}{\mathbf{I}}
\newcommand{\D}{\mathbf{D}}
\newcommand{\boldGamma}{\mathbf{\Gamma}}
\DeclareMathOperator*{\argmax}{argmax}
\theoremstyle{plain}
\newtheorem{thm}{Theorem}
\newtheorem{lemma}[thm]{Lemma}
\newtheorem{example}{Example}
\newtheoremstyle{TheoremNum}
        {\topsep}{\topsep}              
        {\itshape}                      
        {}                              
        {\bfseries}                     
        {.}                             
        { }                             
        {\thmname{#1}\thmnote{ \bfseries #3}}
\theoremstyle{TheoremNum}
\newtheorem{thmn}{Theorem}
\date{}
\long\def\comment#1{}
\begin{document}
\title{Time-Sensitive Bandit Learning and \\ Satisficing Thompson Sampling}
\author[1]{Daniel Russo}
\author[2]{David Tse}
\author[3]{Benjamin Van Roy}
\affil[1]{Northwestern University, daniel.russo@kellogg.northwestern.edu}
\affil[2,3]{Stanford University, bvr@stanford.edu}

\maketitle

\begin{abstract}
The literature on bandit learning and regret analysis has focused on contexts where the goal is to converge on an optimal action in a manner that limits exploration costs.  One shortcoming imposed by this orientation is that it does not treat time preference in a coherent manner.  Time preference plays an important role when the optimal action is costly to learn relative to near-optimal actions.  This limitation has not only restricted the relevance of theoretical results but has also influenced the design of algorithms.  Indeed, popular approaches such as Thompson sampling and UCB can fare poorly in such situations. In this paper, we consider discounted rather than cumulative regret, where a discount factor encodes time preference.  We propose {\it satisficing Thompson sampling} -- a variation of Thompson sampling -- and establish a strong discounted regret bound for this new algorithm.
\end{abstract}

\section{Introduction}
As high level motivation, consider a recommendation system that interacts sequentially with a single user. The system faces the classic tradeoff between exploration and exploitation: by experimenting with different recommendations the system can learn to offer more effective personalized recommendations in the future, but this may require some degradation of current performance. While recommendation systems are often used as a motivating example for studying the multi-armed bandit problem, this problem has several salient features that are not addressed well by standard bandit algorithms and analysis (e.g. the UCB1 algorithm and analysis of \citet{auer2002finite}). First, modern recommendation systems have an enormous number of products, but
 when begining to interact with a new user, the system has a great deal of historical data from interactions with different users, and therefore begins with \emph{significant prior knowledge} about the products. This prior knowledge presents itself in multiple ways. As certain products are typically much more popular than others, the system begins with evidence that certain types of recommendations will be more successful than others.  In addition, data can be used to uncover relevant features of items and users, for example through matrix-completion. As a result, experience offering one type of item to a user can  provide significant information about whether they will like a different type of product. Another distinguishing feature of this problem is the presence of a \emph{limited and uncertain horizon}. The \emph{limited} number of interactions means that a recommendation system likely won't have enough experience with each single user to perfectly tailor its recommendations to them. Instead, it is natural to hope for a system that quickly learns to offer highly effective, but still suboptimal, recommendations to its users. The \emph{uncertain} horizon means that one can't predict {\it a priori} how many times the system will interact with a single user. As a result it is especially valuable to have strong performance during early interactions.

This work focuses on developing algorithms and a framework for theoretical analysis to address problems with these salient features. We build on the Thompson sampling algorithm (TS) \citep{thompson1933} and a recent information theoretic analysis of its performance \cite{russo2016info}, but offer substantial advances to this thread of theoretical work.  TS is able to leverage very general forms of prior information, including rich statistical models that encode a relationship between actions, and prior knowledge that some actions are more likely to offer strong performance than others. The information theoretic analysis of TS yields regret bounds that scale with the entropy of the optimal action distribution.  This dependence reflects the performance benefits of prior information but also points to a substantial potential weakness.  In particular, entropy generally increases with the number of actions and becomes infinite when they form a continuum.  Such regret bounds can therefore be irrelevant when action spaces are very large or infinite.  At the heart of this issue is the emphasis Thompson sampling and this information theoretic analysis place on identification of an optimal action.  There are circumstances when a near-optimal action can be identified quickly even though an optimal one proves elusive.

Instead of focusing on cumulative regret, we will compare algorithms based on their expected discounted regret, where the discount factor encodes time preferences. Note that minimizing expected discounted regret is equivalent to minimizing expected undiscounted regret in a problem where the horizon is a geometric random variable, and hence is uncertain. We introduce satisficing Thompson sampling (STS), a modified form of Thompson sampling designed to address problems with limited horizon. We bound discounted-regret by leveraging the information-theoretic concept of rate-distortion, which offers a means for reasoning about the value of information that is useful for identifying near-optimal, not just optimal, actions.  Through simulation results, and instantiating these regret bounds on specific examples, we show STS can dramatically outperform TS and standard UCB algorithms when the optimal action is costly to learn relative to high-performing suboptimal actions.

Many papers \citep{kleinberg2008multi, rusmevichientong2010linearly, bubeck2011xarmed}  have studied bandit problems with continuous action spaces, where it is also necessary to learn only approximately optimal actions. However, because these papers focus on the asymptotic growth rate of regret they implicitly emphasize later stages of learning, where the algorithm has already identified extremely high performing actions but exploration is needed to identify even better actions. Our discounted framework instead focuses on the initial cost of learning to attain good, but not perfect, performance.  Recent papers \cite{francetich2016toolkita,francetich2016toolkitb} study several heuristics for a discounted objective, though without an orientation toward formal regret analysis.  The Knowledge Gradient algorithm of \cite{ryzhov2012knowledge}  also takes time horizon into account and can learn suboptimal actions when its not worthwhile to identify the optimal action. This algorithm tries to directly approximate the optimal Bayesian policy using a one-step lookahead heuristic, but unfortunately there are no performance guarantees for this method. \citet{deshpande2012linear} consider a linear bandit problem with dimension that is too large relative to the desired horizon. They propose an algorithm that limits exploration and learns something useful within this short time frame. \citet{berry1997bandit, wang2009algorithms} and \citet{bonald2013two} study an infinitely-armed bandit problem in which it's impossible to identify an optimal action and propose algorithms to minimizes the asymptotic growth rate of regret. While we will instantiate our general regret bound for STS on the infinitely-armed bandit problem, we use this example mostly to provide a simple analytic illustration. We hope that the flexibility of STS and our analysis framework allow this work to be applied to more complicated time-sensitive learning problems.

\section{Problem Formulation}

An agent sequentially chooses actions $(A_t)_{t\in \N}$ from the action set $\A$ and observes the corresponding outcomes $\left(Y_{t, A_t}\right)_{t\in \N}$. There is a random outcome $Y_{t, a} \in \Y$ associated with each $a\in \A$ and time $t \in \N\equiv \{0,1,2..\}$.  Let $Y_t \equiv (Y_{t,a})_{a \in \A}$ be the vector of outcomes at time $t \in \N$.  There is a random variable $\theta$ such that, conditioned on $\theta$, $(Y_t)_{t \in \N}$ is an iid sequence.  Ours can be thought of as a Bayesian formulation, in which the distribution of $\theta$ represents the agent's prior uncertainty about the true characteristics of the system, and conditioned on $\theta$, the remaining randomness in $Y_t$ represents idiosyncratic noise in observed outcomes.

The agent associates a reward $R(y)$ with each outcome $y\in \Y$.  Let $R_{t,a} \equiv R(Y_{t,a})$ denote the reward corresponding to outcome $Y_{t,a}$.
The history available when selecting action $A_t$ is
$$\hist = (A_0, Y_{0,A_0}, \ldots, A_{t-1}, Y_{t-1,A_{t-1}}).$$
The agent selects actions according to a policy, which is a sequence of functions
$(\pi_t: t \in \N)$, each mapping a history and an exogenous random variable $\xi$ to an action. In particular $A_t= \pi_{t}(\hist, \xi)$ for each $t$, where $\xi$ is some random variable that is independent of $\theta$ and $(Y_{t} : t\in \N)$.

We denote by $Y_{\infty}$ an independent copy of $Y_t$. In particular, $\Prob(Y_{\infty} \in \cdot | \theta)= \Prob(Y_{t} \in \cdot | \theta)$ but conditioned on $\theta$, $Y_{\infty}$ is drawn independently of $(Y_{t}: t \in \N)$. Let
 $A^* \in \,\, \argmax_{a \in \A} \, \E[R(Y_{\infty , a}) | \theta]$
  denote the true optimal action and let $R^*  = \max_{a\in \A}\E[R(Y_{\infty , a}) | \theta]$ denote the corresponding reward.  As a performance metric, we consider
  \emph{expected discounted regret}, defined by
 \[
\E\left[\sum_{t=0}^{\infty} \alpha^{t} (R^* - R_{t,A_t})\right],
\]
which measures a discounted sum of the expected performance gap between a benchmark policy which always chooses the optimal action $A^*$
and the chosen actions $(A_t : t \in \N)$.  This deviates from the typical notion of expected regret in its dependence on a discount factor $\alpha \in [0,1]$.
Regular expected regret corresponds to the case of $\alpha = 1$.  Smaller values of $\alpha$ convey time preference by weighting gaps in nearer-term
performance higher than gaps in longer-term performance.

It is worth noting that minimizing expected discounted regret is equivalent to maximizing expected discounted reward, which is the
objective used in the classical Bayesian formulation of the multi-armed bandit problem \citep{gittins2011multi}.  For problems of the kind we consider,
expected discounted reward can in principle be maximized via dynamic programming.  However, solving the associated dynamic programs
is computationally intractable.  As such, similarly with the bulk of the recent bandit learning literature, we consider heuristic policies
and aim to bound regret, though in this paper we consider a discounted variation of regret.

\section{Algorithms}\label{sec: algorithms}

Thompson sampling (TS) is a popular algorithm that implements a useful decision policy.
Over each $t$th period, TS selects an action $A_t$ as follows:
\begin{enumerate}
\item Sample $\hat{\theta}_t \sim \Prob(\theta | \hist)$
\item Let $A_t \in \arg\max_{a \in \mathcal{A}} \E\left[R_{t,a} | \theta = \hat{\theta}_t\right]$
\end{enumerate}
We will assume that actions are indexed and that ties are broken by selecting the action with the smallest index.
Note that, as should be the case for any decision policy,
we can write TS as $A_t = \pi_t(\hist, \xi)$, for an appropriately defined $(\pi_t: t \in \N)$, where $\xi$ is independent of $\theta$ and $(Y_{t} : t\in \N)$.

As one key contribution of this paper, we introduce a modification of TS, which we will call satisficing Thompson sampling (STS).
While TS aims to identify an optimal action, STS is designed to identify an action that is sufficiently satisfying, or close enough to optimal.
Over each $t$th period, STS selects an action $A_t$ as follows:
\begin{enumerate}
\item Sample $\hat{\theta}_t \sim \Prob(\theta | \hist)$
\item Let $A_t \in \arg\max_{a \in \mathcal{A}} \E\left[R_{t,a} | \theta = \hat{\theta}_t\right]$
\item Let $\hat{\tau} = \min\left\{\tau \in \{1,\ldots,t-1\} : \E\left[R_{t,A_{\tau}} | \theta =\hat{\theta}_t\right] + \epsilon \geq \E\left[R_{t,A_t} | \theta=\hat{\theta}_t\right]\right\}$
\item If $\hat{\tau}$ is not null then let $A_t = A_{\hat{\tau}}$
\end{enumerate}
Note that $\epsilon \geq 0$ is supplied to the algorithm as a tolerance parameter.  When $\epsilon = 0$, STS is equivalent to TS.  Otherwise,
STS attributes preference to selecting previously selected actions.  As we will further discuss and formalize, this can result in substantial benefit
in the face of time preference.  In particular, when the optimal action requires a long time to learn but an $\epsilon$-optimal action can be learned
quickly, STS can quickly achieve $\epsilon$-optimal performance where Thompson sampling would continue to incur significant losses
deploying resources toward eventual identification of the optimal action.

It is worth mentioning that STS can be applied efficiently across the wide variety of problems that are amenable to Thompson sampling.  This includes,
for example, complex parametric bandit problems.  For example, we present in Section \ref{se:computations} computational results involving a
linear bandit problem with many arms and many parameters to learn.

{\bf A probability matching property:} Thompson sampling satisfies a  powerful probability matching property: under TS, $\Prob_{t}(A_t = a) = \Prob_{t}(A^*=a)$ for all $a\in \A$, and so action-sampling probabilities are \emph{matched} to the posterior distribution of the optimal action. Under STS, action-sampling probabilities instead are essentially matched to the posterior-distribution of the first $\epsilon$--optimal action sampled by the algorithm. More precisely, if $\tau = \inf\{t|\, \E[R_{t, A_t}| \theta]  \geq R^*-\epsilon\}$ then at time $t$ STS sets $\Prob_{t}(A_t = A_k) = \Prob_{t}(\tau = k)$ for each $k<t$. With probability $\Prob_{t}(\tau \geq t)$ STS selects a new, or previously un-sampled action. In this way, the algorithm aims to identify a satisfactory action while concentrating exploration effort on the smallest number of arms required to do so.

\section{Example: Infinitely-Armed Deterministic Bandit}\label{sec: infinite deterministic bandit}

To clarify our motivation, we now provide a simple analytic illustration of advantages enjoyed by STS. Consider a problem with a countable action space $\mathcal{A} = \{1,2,\ldots\}$ in which each action $a \in \mathcal{A}$ yields reward $\theta_a$.  Our prior over each $\theta_a$ is independent and uniform over the interval $[0,1]$.  The optimal reward is almost surely $R^* = 1$.

For this problem, which we refer to as the infinitely-armed deterministic bandit problem, TS never selects the same action twice.
This is because, with probability one, no action selected within a finite time horizon yields reward $1$, and as such, at any point in time, there are better
actions than those previously selected by TS. STS, in contrast, stops searching after finding an action that generates reward exceeding $1-\epsilon$.
After such an action is identified, STS repeatedly selects that action.

The benefits of STS can be formalized in terms of bounds on expected discounted regret.  The following result, proved in the appendix, provides an expression for
the expected discounted regret of TS in our infinitely-armed deterministic bandit problem.
\begin{thm}\label{thm: regret of TS}
For all $\alpha \in [0,1]$, under Thompson sampling in the infinitely-armed deterministic bandit problem then
$$\E\left[\sum_{t=0}^\infty \alpha^t (R^* - R_{t,A_t})\right] =\frac{1}{2(1-\alpha)}.$$
\end{thm}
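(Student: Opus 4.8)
The plan is to show that, under Thompson sampling, every period incurs exactly the same expected instantaneous regret of $1/2$, and then to sum the resulting geometric series. Since $R^* = 1$ almost surely and every reward lies in $[0,1]$, the summands $\alpha^t(R^* - R_{t,A_t})$ are nonnegative, so I would invoke Tonelli's theorem to interchange expectation and summation, reducing the claim to showing $\E[R^* - R_{t,A_t}] = 1/2$ for every fixed $t$. The optimal reward $R^* = 1$ holds almost surely because it is the supremum of countably many independent uniform draws.

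The computation rests on two observations. First, TS selects a previously unplayed arm in every period, almost surely: after $t$ plays the agent has observed the exact rewards $\theta_{A_0}, \ldots, \theta_{A_{t-1}}$, and since each is an independent uniform draw, $\max_{k<t}\theta_{A_k} < 1$ almost surely, whereas the posterior samples $\hat{\theta}_{t,a}$ attached to the infinitely many unplayed arms are i.i.d.\ uniform on $[0,1]$ with supremum $1$, so an unplayed arm always looks best. Second --- and this is the key inefficiency of TS here --- conditioned on $\hist$, the true reward $\theta_a$ of an unplayed arm and its posterior sample $\hat{\theta}_{t,a}$ are \emph{independent} uniform$[0,1]$ variables. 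Because $A_t$ is a function of the samples alone, the event $\{A_t = a\}$ is independent of $\theta_a$ given $\hist$ for each unplayed $a$, so $\E[\theta_a \mathbf{1}\{A_t = a\}\mid\hist] = \tfrac{1}{2}\,\Prob(A_t = a \mid \hist)$. Summing over the (unplayed) arms, whose selection probabilities total $1$, gives $\E[R_{t,A_t}\mid\hist] = \tfrac{1}{2}$, hence $\E[R^* - R_{t,A_t}] = \tfrac{1}{2}$ and $\sum_{t\geq 0}\alpha^t/2 = 1/(2(1-\alpha))$.

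I expect the main obstacle to be making the first observation rigorous, since the supremum of the unplayed arms' samples equals $1$ but is not attained, so $\argmax_{a}\hat{\theta}_{t,a}$ need not exist over the full infinite action set. To handle this I would either (i) define the policy on a truncation to $n$ arms, where the argmax is genuine, verify that for each finite $n$ the selected arm is unplayed with probability tending to $1$ and that its conditional reward is exactly $1/2$, and then pass to the limit using dominated convergence on the discounted sum; or (ii) argue directly that any measurable selection consistent with the stated index-based tie-breaking must return an unplayed arm, since for almost every sample path there is an unplayed arm whose sample strictly exceeds every played arm's reward. The independence step in the second observation is the conceptually important part, and it requires only that the posterior sample be drawn independently of $\theta$ given $\hist$, which is guaranteed by the definition of the policy through the exogenous randomizer $\xi$.
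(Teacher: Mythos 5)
Your proposal is correct and follows essentially the same route as the paper's proof: TS always plays a fresh arm, a fresh arm has conditional expected reward $1/2$, and the geometric series gives $1/(2(1-\alpha))$. The paper's version is a three-line argument that simply asserts these facts; your additional care about Tonelli, the conditional-independence step, and the non-attainment of the argmax over infinitely many unplayed arms is sound but addresses details the paper elects to leave implicit rather than a genuinely different argument.
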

It is enlightening to compare this to the following bound on expected discounted regret of STS in our infinite deterministic bandit problem, which is also proved in the appendix.
\begin{thm}\label{thm: direct regret bound for STS}
For all $\alpha \in [0,1]$, under satisficing Thompson sampling with tolerance $\epsilon = \sqrt{1-\alpha}$ in the infinitely-armed deterministic bandit problem,
$$\E\left[\sum_{t=0}^\infty \alpha^t (R^* - R_{t,A_t})\right] \leq \frac{1}{\sqrt{1-\alpha}}.$$
\end{thm}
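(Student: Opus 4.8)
The plan is to reduce STS's behavior on this problem to a simple stopping-time description and then evaluate the resulting discounted regret in closed form. \textbf{Reduction.} Since the reward of arm $a$ equals $\theta_a$, playing an arm reveals its reward exactly while every unplayed arm keeps a $\mathrm{Uniform}[0,1]$ posterior. I would establish three facts. First, the posterior-sampling step always proposes a previously unplayed arm: the finitely many played arms have known values strictly below $1$ almost surely, whereas the imagined values $\hat\theta_{t,a}$ of the infinitely many unplayed arms have supremum $1$ almost surely (for any $\delta>0$ the probability that all of them fall below $1-\delta$ is $\lim_{N\to\infty}(1-\delta)^N=0$). Second, because the true reward of an unplayed arm is independent of the imagined values that drive selection, the reward collected when exploring a fresh arm is an independent $\mathrm{Uniform}[0,1]$ draw. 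Third, with the proposal's value equal to $1$, the satisficing test triggers a reversion to a past arm exactly when that arm's value is at least $1-\epsilon$, and the $\min$ in the definition of $\hat\tau$ makes it revert to the first such arm. Hence STS draws i.i.d. $X_0,X_1,\dots\sim\mathrm{Uniform}[0,1]$, exploring until $N=\min\{k:X_k\ge 1-\epsilon\}$ and replaying arm $A_N$ (with reward $X_N$) at every later time.

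\textbf{Distributional setup.} Then $N$ is geometric with $\Prob(N=n)=(1-\epsilon)^n\epsilon$; conditioned on $\{t<N\}=\{X_0,\dots,X_t<1-\epsilon\}$ each explored reward is $\mathrm{Uniform}[0,1-\epsilon]$, while $X_N$ is $\mathrm{Uniform}[1-\epsilon,1]$ and is independent of $N$ (its law given $N=n$ does not depend on $n$). Using $R^*=1$ together with $R_{t,A_t}=X_t$ for $t<N$ and $R_{t,A_t}=X_N$ for $t\ge N$, I would split the discounted regret as
\[
\E\!\left[\sum_{t=0}^{N-1}\alpha^t(1-X_t)\right]+\E\!\left[(1-X_N)\sum_{t=N}^{\infty}\alpha^t\right].
\]

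\textbf{Evaluation.} For the exploration term, independence of the $X_s$ gives $\E[(1-X_t)\mathbf{1}\{t<N\}]=\tfrac{1-\epsilon^2}{2}(1-\epsilon)^t$, and summing the geometric series yields $\tfrac{1-\epsilon^2}{2(1-\alpha(1-\epsilon))}\le\tfrac{1-\epsilon^2}{2\epsilon}$, where $1-\alpha(1-\epsilon)=(1-\alpha)(1-\epsilon)+\epsilon\ge\epsilon$. For the exploitation term, $X_N\perp N$ and $\E[1-X_N]=\epsilon/2$ give $\tfrac{1}{1-\alpha}\E[1-X_N]\,\E[\alpha^N]\le\tfrac{\epsilon}{2(1-\alpha)}$ since $\E[\alpha^N]\le1$. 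Setting $\epsilon=\sqrt{1-\alpha}$ makes $1-\epsilon^2=\alpha$, so the exploration term is at most $\tfrac{\alpha}{2\sqrt{1-\alpha}}$ and the exploitation term at most $\tfrac{1}{2\sqrt{1-\alpha}}$; their sum is $\tfrac{1+\alpha}{2\sqrt{1-\alpha}}\le\tfrac{1}{\sqrt{1-\alpha}}$.

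\textbf{Main obstacle.} The evaluation is routine; the delicate part is the reduction. The argmax in the posterior-sampling step is not attained, because infinitely many unplayed arms each contribute an imagined value whose supremum is $1$. The cleanest remedy is to treat the proposal's value as this essential supremum $1$ (equivalently, to pass to the limit over a growing finite pool of candidate arms), which pins the satisficing threshold at exactly $1-\epsilon$ and justifies that exploration halts at the geometric time $N$; once this is in place, every downstream step is elementary.
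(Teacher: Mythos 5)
Your proof is correct and follows essentially the same route as the paper's: reduce STS on this problem to exploring fresh uniform arms until the geometric stopping time $N=\min\{k:X_k\ge 1-\epsilon\}$, split the discounted regret at that time, and optimize $\epsilon$ (your computation even yields the slightly sharper intermediate bound $\tfrac{1+\alpha}{2\sqrt{1-\alpha}}$). The extra care you take with the non-attained argmax over infinitely many unplayed arms is a point the paper silently elides, and your limiting-pool resolution is a reasonable way to make the algorithm well-posed here.
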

For $\alpha$ close to $1$, $1/\sqrt{1-\alpha} \ll 1/(1-\alpha)$, and therefore STS vastly outperforms TS.
In fact, as $\alpha$ approaches $1$, the ratio between expected regret of TS and that of STS goes to infinity.

\section{Computational Examples}
\label{se:computations}

Computational studies involving a broader range of bandit problems further illustrate potential benefits afforded by STS.
In this section, we present results from experiments with four bandit problems.  Each case is designed so that near-optimal
actions can be identified far sooner than the optimal action.  In each case, the per-period regret of STS diminishes more rapidly
than that of TS over early time periods.

Our first is a deterministic bandit problem with 250 actions.  The mean reward
associated with each action is independently sampled from $\text{unif}([0,1])$.  When an action is sampled the realized reward is equal to
the mean reward; in other words, there is no observation noise.  Figure \ref{fig:computational-results}(a) plots per-period regret of TS and STS
over 500 time periods, averaged over 5000 simulations, each with an independently sampled problem instance.  For STS, we used a tolerance
parameter of $0.05$.

We next consider a problem that is the same as our previous one except with observation noise.  In particular, instead of observing
the mean reward, after selecting an action, we observe a binary reward that is one with probability equal to the mean reward.
Figure \ref{fig:computational-results}(b) plots average per-period regret over 5000 simulations.  For STS, we used a tolerance
parameter of $0.05$.

We now consider another bandit problem with 250 actions, each with a mean reward sampled independently independently from $N(0,1)$.
Upon taking an action, we observe the sum of the action's mean reward and an independent $N(0,1)$ sample that represents observation noise.
Figure \ref{fig:computational-results}(c) plots per-period regret of TS and STS
over 500 time periods, averaged over 5000 simulations, each with an independently sampled problem instance.  For STS, we used a tolerance
parameter of $0.5$.

Finally, we consider a bandit problem with mean rewards given by a linear function.  In particular, mean rewards are given
by a vector $L \theta \in \Re^{|\mathcal{A}|}$, where $L \in \Re^{|\mathcal{A}| \times M}$ is a randomly generated loadings matrix,
with each row independently drawn uniformly from the unit sphere, and $\theta \in \Re^M$ is sampled from $N(0,I)$.
For our computational study, we let $\mathcal{A} = \{1, \ldots, 250\}$ and $M=250$.  The decision-maker knows $L$
and begins with a $N(0,I)$ prior on $\theta$.
Upon taking an action, we observe the sum of the action's mean reward and an independent $N(0,2)$ sample that represents observation noise.
Figure \ref{fig:computational-results}(d) plots per-period regret of TS and STS
over 500 time periods, averaged over 5000 simulations, each with an independently sampled problem instance.  For STS, we used a tolerance
parameter of $1.0$.

\begin{figure}[h!]
\subfloat[]{\includegraphics[height=2.1in]{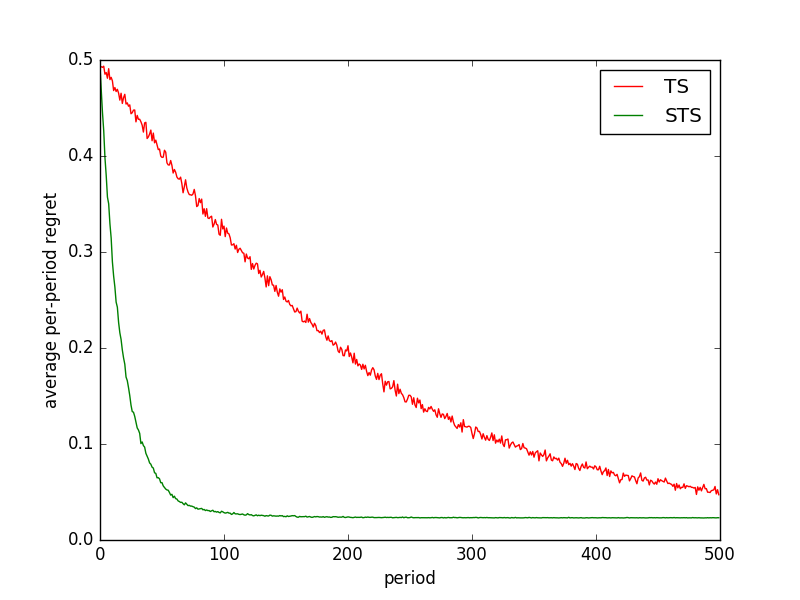}}
\subfloat[]{\includegraphics[height=2.1in]{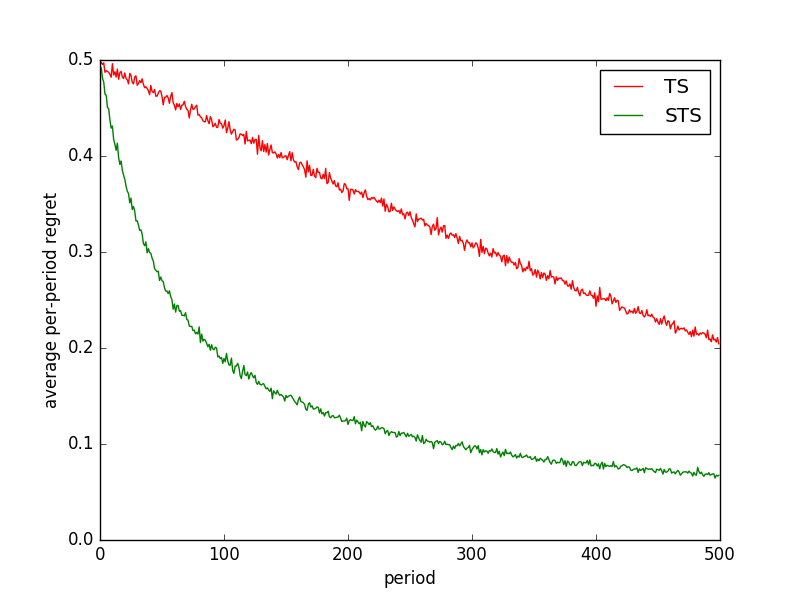}} \\
\subfloat[]{\includegraphics[height=2.1in]{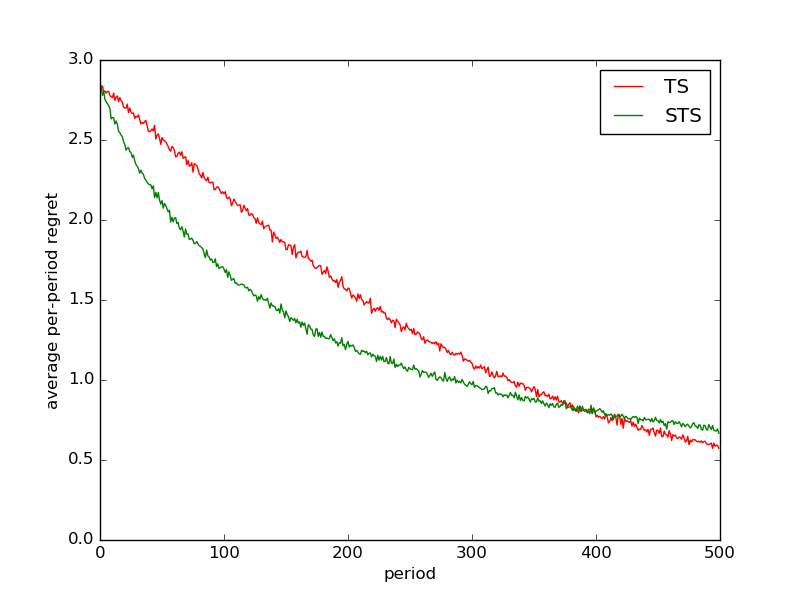}}
\subfloat[]{\includegraphics[height=2.1in]{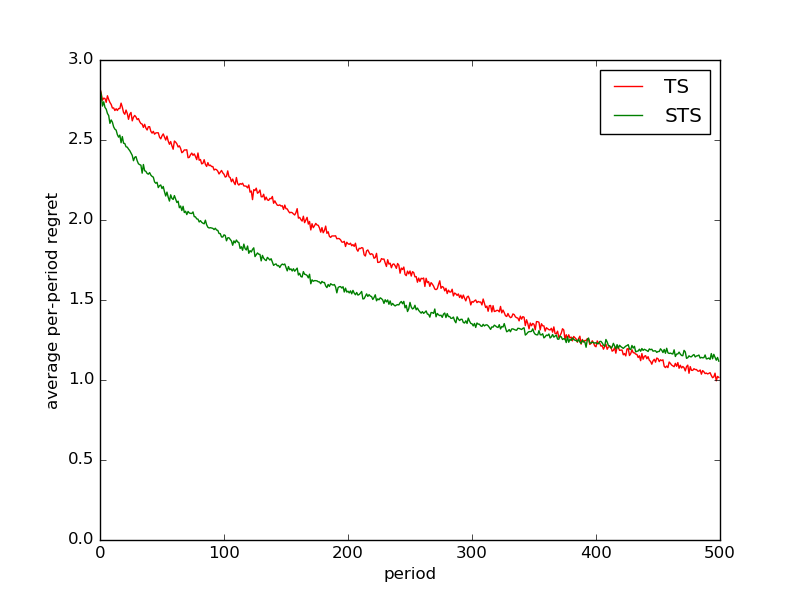}}
\caption{TS versus STS for the (a) independent uniform bandit with noiseless observations, (b) independent uniform-Bernoulli bandit,
(c) independent Gaussian bandit, and (d) linear-Gaussian bandit.}
\label{fig:computational-results}
\end{figure}

\section{A General Regret Bound}
This section provides a general discounted regret bound and a new information-theoretic analysis technique.  We'll leverage this general regret bound when analyzing STS in the next section. We begin by reviewing the information-theoretic analysis of Thompson sampling of \citet{russo2016info}, on which our analysis builds. Before beginning, let us first introduce some additional notation.
\subsection{Notation}
We denote by $\E_{t}[\cdot]=\E[\cdot | \hist ]$ the expectation operator conditioned on the history up to time $t$ and similarly define $\Prob_{t}(\cdot) = \Prob(\cdot | \hist)$. We denote the entropy of a discrete random variable $X$ by $\Ent(X)$, the mutual information between two random variables $X$ and $Y$ by $\I(X; Y)$ and the Kullback-Leibler divergence between probability distributions $P$ and $Q$ by $\D(P||Q)$.
The definitions of entropy and mutual information depend on a base measure. We use $\Ent_{t}(\cdot)$ and $\I_{t}(\cdot\, ,\cdot)$ to denote entropy and mutual-information when the base-measure is the posterior distribution $\Prob_{t}$. For example, if $X$ is a discrete random variable taking values in a set $\mathcal{X}$,
\[
\Ent_{t}(X) = - \sum_{x\in \mathcal{X}} \Prob_{t}(X=x) \log \Prob_{t}(X=x).
\]
Due to its dependence on the realized history $\hist$, $\Ent_{t}(X)$ is a random variable. The standard definition of conditional entropy integrates over this randomness, and in particular, $\E[\Ent_{t}(X)] = \Ent(X | \hist)$.

\subsection{Information Theoretic Analysis of Thompson Sampling}
The regret analysis in \cite{russo2016info} relates the regret an algorithm incurs to the information it acquires about the identity of optimal action $A^* \in \arg\max_{a} \E[R(Y_{\infty, a}) | \theta]$.  They define the \emph{information ratio} in a given period to be the ratio between the square of single-period expected regret and the information acquired about the optimal action:
\begin{equation}\label{eq: old-info-ratio}
\frac{\E_{t}[R^* - R_{t, A_t}]^2}{\I_{t}( A^* ; Y_{t,A_t} | \xi)}.
\end{equation}
It's shown that \emph{every algorithm} satisfies a bound on un-discounted expected-regret up period $T$ in terms of its average information ratio over the first $T$ periods and the entropy of the optimal action $\Ent(A^*)$. Here the information ratio roughly captures the cost-per-bit of information the algorithm acquires about the optimum, and the entropy $\Ent(A^*)$ measures the magnitude of the decision-maker's initial uncertainty about the identity of the optimal action. For a number of widely studied classes of online optimization problems, strong regret bounds for Thompson sampling can be derived by bounding the algorithm's information ratio. Subsequent work by \citet{bubeck2015bandit} and \citet{bubeck2015multi} bounds the information-ratio for bandit problems with convex reward functions.

\subsection{A Modified Information Ratio}
This section introduces a modified information ratio, which is more appropriate for time-sensitive online learning problems. As motivation, consider the infinitely-armed deterministic bandit of Section \ref{sec: infinite deterministic bandit}. While no algorithm could identify an optimal action in that example, STS is able to efficiently converge to a satisfactory level of performance. In this sense, although the algorithm can't identify the true optimum, it seems to acquire enough information to identify some high-reward action $\tilde{A}$. Building on this intuition, our information-theoretic analysis will aim to formally relate regret to the information the algorithm acquires about this $\tilde{A}$. To help ground this discussion, consider two examples of such an $\tilde{A}$ arising from different problem settings.

\begin{example}
Consider the infinitely-armed deterministic bandit of Section \ref{sec: infinite deterministic bandit}. As time progresses, STS samples a sequence of actions $(A_0, A_1, A_2,...)$. Let $\tau = \min\{ t | \theta_{A_t} \geq 1-\epsilon\}$ denote the first time the algorithm samples an action that is $\epsilon$--optimal and set $\tilde{A} := A_{\tau}$ to be the corresponding action. In this example, there are many $\epsilon$--optimal, or ''satisfactory'' actions, and $\tilde{A}$ is taken to be the first one discovered by the algorithm.
\end{example}

\begin{example}
Consider a bandit problem where mean-rewards are given by a linear function. In particular, $\A \subset \mathbb{R}^d$, and $\E[R_{t,a} | \theta] = a^T \theta$  for an unknown vector $\theta$. Suppose that $\theta \sim N(0,I)$ and $\A$ consists of $n$ vectors spread out uniformly along boundary of the $d$ dimensional unit sphere $\{a \in \mathbb{R}^d : \| a\|_2 =1\}$. The optimal action $A^* = \arg\max_{a\in \A} \theta^T a$  is then uniformly distributed over $\A$, and hence $\Ent(A^*) = \log n$. Here entropy tends to infinity the number of actions grows, and it takes an enormous amount of information to exactly identify $A^*$. For this example, we might take $\tilde{A}$ to be a coarser version of $A^*$. In particular, for $m \ll n$, let $\tilde{\A}$ consist of $m$ vectors spread out uniformly along boundary of the $d$ dimensional unit sphere $\{a \in \mathbb{R}^d : \| a\|_2 =1\}$ and let $\tilde{A} = \arg\max_{a\in \A} \theta^T a$. This can be viewed as a form of lossy-compression, where one may have $\Ent(\tilde{A}) \ll \Ent(A^*)$ but $\E[R_{t, \tilde{A}}]\geq \E[R_{t, A^*}]+\epsilon$ for some small $\epsilon>0$.
\end{example}

In each of these examples, the action $\tilde{A}$ can be viewed as some random variable taking values in the action set $\A$. In the second example, $\tilde{A}$ is a deterministic function of $\theta$, and is random only because of the randomness in $\theta$. In the first example, $\tilde{A}$ also depends on the algorithm's internal randomness, which determines the order in which actions are sampled.

To address problems of this form, we introduce the following modified information ratio. For any (random) action $\tilde{A}$ and (random) action process $\{A_t : t \in \N\}$, define
\begin{equation}\label{eq: information-ratio}
\boldGamma\left(\tilde{A}, \{A_t: t \in \N\}\right) = (1-\alpha^2) \sum_{t=0}^\infty \alpha^{2t} \E\left[\frac{\E_t[\tilde{R} - R_{t,A_t}]^2}{\I_t(\tilde{A};  Y_{t,A_t} | \xi)}\right],
\end{equation}
where $\tilde{R} = R(Y_{\infty,\tilde{A}})$.  Recall that $Y_\infty$ denotes an independent sample of the action-outcome vector.  The subscript of $\E_t$ and $\I_t$ indicates that the random variables are drawn from the probability space conditioned on $\hist$. The ratio $\E_t[\tilde{R} - R_{t,A_t}]^2/\I_t(\tilde{A};  Y_{t,A_t} | \xi)$ relates the current shortfall in performance relative to the benchmark action $\tilde{A}$ to the amount of information acquired about the benchmark action. The right-hand-side of (\ref{eq: information-ratio}) is the discounted average of these single-period ratios. The square in the discount factor $\alpha$ is consistent with the problem's original discount rate, since $\E_t[\alpha^t(\tilde{R} - R_{t,A_t})]^2 = \alpha^{2t} \E_t[\tilde{R} - R_{t,A_t}]^2$.

\subsection{General Regret Bound}

The following theorem bounds the expected discounted regret of any algorithm, or action process, $\{A_t : t\in \N\}$, in terms of the information ratio \eqref{eq: information-ratio}.
\begin{thm}
\label{th:discounted-regret}
For any action process $\{A_t : t \in \N\}$ and $\tilde{A}: \Omega \rightarrow \A$
\begin{equation}\label{eq: discounted-regret}
\E\left[\sum_{t=0}^\infty \alpha^t (R^* - R_{t,A_t})\right]
\leq \frac{\E[R^* - \tilde{R}]}{1-\alpha} + \sqrt{\frac{\Gamma\left(\tilde{A}, \{A_t: t \in \N\}\right) \Ent(\tilde{A}| \xi)}{1-\alpha^2}}.
\end{equation}
where $\tilde{R} = R(Y_{\infty,\tilde{A}})$.
\end{thm}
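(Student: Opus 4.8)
The plan is to decompose regret relative to $R^*$ into a bias term, measuring the gap between the true optimum $R^*$ and the surrogate benchmark $\tilde R$, plus a term measuring shortfall relative to $\tilde R$, which the information ratio is designed to control. Writing $R^*-R_{t,A_t}=(R^*-\tilde R)+(\tilde R-R_{t,A_t})$ and summing the discounted series, the first piece contributes exactly $\E[R^*-\tilde R]/(1-\alpha)$, since neither $R^*$ nor $\tilde R$ depends on $t$ and $\sum_{t=0}^\infty\alpha^t=1/(1-\alpha)$; this recovers the first term of \eqref{eq: discounted-regret}. It then remains to show $\E[\sum_{t=0}^\infty \alpha^t(\tilde R-R_{t,A_t})]\le\sqrt{\Gamma\,\Ent(\tilde A\mid\xi)/(1-\alpha^2)}$.

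For this remaining piece I would pass to conditional expectations by the tower property (the interchange of sum and expectation being routine under the discount factor), bounding $\sum_t \alpha^t\E[\tilde R-R_{t,A_t}]=\sum_t\alpha^t\E[\E_t[\tilde R-R_{t,A_t}]]\le\sum_t\alpha^t\E[|\E_t[\tilde R-R_{t,A_t}]|]$. Denoting by $\Gamma_t=\E_t[\tilde R-R_{t,A_t}]^2/\I_t(\tilde A;Y_{t,A_t}\mid\xi)$ the single-period ratio appearing inside the definition of $\Gamma$, the defining relation gives the pointwise identity $|\E_t[\tilde R-R_{t,A_t}]|=\sqrt{\Gamma_t\,\I_t(\tilde A;Y_{t,A_t}\mid\xi)}$, where I adopt the convention $0/0=0$; this is harmless, since a finite $\Gamma$ forces the numerator to vanish on any period where the information gain is zero. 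The central step is then a single Cauchy--Schwarz inequality applied jointly over the time index and the probability space, obtained by splitting the discount as $\alpha^t\sqrt{\Gamma_t\I_t}=(\alpha^t\sqrt{\Gamma_t})\cdot\sqrt{\I_t}$. This yields $\sum_t\alpha^t\E[\sqrt{\Gamma_t\I_t}]\le\big(\sum_t\alpha^{2t}\E[\Gamma_t]\big)^{1/2}\big(\sum_t\E[\I_t(\tilde A;Y_{t,A_t}\mid\xi)]\big)^{1/2}$, and the first factor equals $\sqrt{\Gamma/(1-\alpha^2)}$ directly from \eqref{eq: information-ratio}.

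The crux is to bound the second factor by the entropy, namely to prove $\sum_{t=0}^\infty\E[\I_t(\tilde A;Y_{t,A_t}\mid\xi)]\le\Ent(\tilde A\mid\xi)$. I would establish this through the chain rule for mutual information, which gives $\sum_{t=0}^{T-1}\I(\tilde A;Y_{t,A_t}\mid\mathcal{F}_t,\xi)=\I\big(\tilde A;(Y_{0,A_0},\ldots,Y_{T-1,A_{T-1}})\mid\xi\big)$. The step that makes this work is the observation that, given $\xi$ together with the past observations, the past actions $A_0,\ldots,A_{t-1}$ are deterministically determined, since each $A_s=\pi_s(\mathcal{F}_s,\xi)$; hence conditioning on $(Y_{0,A_0},\ldots,Y_{t-1,A_{t-1}},\xi)$ is equivalent to conditioning on $(\mathcal{F}_t,\xi)$, and each chain-rule term equals $\E[\I_t(\tilde A;Y_{t,A_t}\mid\xi)]$. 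I expect correctly threading the $\xi$-conditioning through this telescoping argument to be the main subtlety; indeed this is precisely why both the information ratio and the entropy are taken conditional on $\xi$. Since mutual information never exceeds the entropy of the discrete variable $\tilde A$, the right-hand side is at most $\Ent(\tilde A\mid\xi)$ for every $T$, and letting $T\to\infty$ gives the bound. Multiplying the two factors produces $\sqrt{\Gamma\,\Ent(\tilde A\mid\xi)/(1-\alpha^2)}$, which together with the bias term yields \eqref{eq: discounted-regret}.
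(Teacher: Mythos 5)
Your proposal is correct and follows essentially the same route as the paper: the same decomposition of regret through $\tilde R$, the same pointwise identity $|\E_t[\tilde R-R_{t,A_t}]|=\sqrt{\Gamma_t\,\I_t}$ followed by a single Cauchy--Schwarz over time and the probability space, and the same chain-rule argument showing $\sum_t\E[\I_t(\tilde A;Y_{t,A_t}\mid\xi)]\le\Ent(\tilde A\mid\xi)$. Your explicit handling of the absolute value and the $0/0$ convention is slightly more careful than the paper's, but the argument is the same.
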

This bound decomposes regret into the sum of two terms; one which captures the discounted  performance shortfall of the benchmark action $\tilde{A}$ relative to $A^*$, and one which bounds the additional regret incurred while learning to identify $\tilde{A}$. Breaking things down further, the entropy $\Ent(\tilde{A}| \xi)$ measures the magnitude of the decision-maker's initial uncertainty about $\tilde{A}$, and the information ratio measures the regret incurred in reducing this uncertainty. It's worth highlighting that for any given action process, this bound holds simultaneously for all possible choices of $\tilde{A}$, and in particular, it holds for the $\tilde{A}$ minimizing the right hand side of \eqref{eq: discounted-regret}.

\subsection{Connections to Rate Distortion Theory}
In information-theory, the entropy of a source characterizes the length of an optimal lossless encoding. The celebrated rate-distortion theory  \citep[Chapter~10]{cover2012elements} characterizes the number of bits required for an encoding to be close in some loss metric. This theory resolves when it is possible to to derive a satisfactory lossy compression scheme while transmitting far less information than required for a lossless compression. At a high level, the developments in this paper represent a shift from entropy to the use of rate-distortion function. Whereas prior results depend on the entropy of $A^*$, Theorem \ref{th:discounted-regret} depends on a naturally defined rate distortion function for compressing the optimal decision $A^*$:
$$R(D) := \min_{\E[R^* - \tilde{R}] \leq D} \I(\tilde{A}; A^*).$$
When $\tilde{A}$ depends deterministically on $A^*$, $\I(\tilde{A};A^*) = \Ent(\tilde{A})$, and hence the rate-distortion function describes the optimal tradeoff between the loss in reward  $\E[R^* - \tilde{R}] $ and the entropy of $\tilde{A}$, precisely what is needed in minimizing the right hand side of \eqref{eq: discounted-regret}.

\section{Information Ratio Analysis of the Infinitely-Armed Bandit}
The general regret bound of the previous section can be instantiated on two variants of the infinite-armed bandit problem. The next subsection revisits the deterministic infinite-armed bandit of Section \ref{sec: infinite deterministic bandit}, and shows how to derive a regret bound for STS using Theorem \ref{th:discounted-regret}. Subsection \ref{subsec: infinite noisy} studies an extension of the infinite-armed bandit problem in which reward-observations are noisy. Again, in this setting Theorem \ref{th:discounted-regret} can be specialized to derive a regret bound for STS.

\subsection{Infinitely--Armed Bandit with Deterministic Observations}
We now revisit the infinitely-armed deterministic bandit problem of Section \ref{sec: infinite deterministic bandit}. By specializing  our general regret bound this setting, we will effectively recover the bound of Theorem \ref{thm: direct regret bound for STS} that was derived from direct analysis. Because there is no observation noise in this example, once STS samples an action with reward exceeding $1-\epsilon$, it will sample it in all subsequent periods. Before that point, the algorithm knows with certainty that no previously-sampled action generates reward exceeding $1-\epsilon$, and so a new action will be selected in every period.  Let $\tau = \min\{ t | \theta_{A_t} \geq 1-\epsilon\}$ denote the first time an $\epsilon$--optimal action is sampled. The next result applies the general regret bound of Theorem \ref{th:discounted-regret} to this problem with $\tilde{A} = A_{\tau}$, so the benchmark action is the first $\epsilon$--optimal action sampled by STS.

\begin{thm}\label{thm: info ratio of infinite deterministic bandit}
For any $\alpha \in (0,1)$, if STS is applied to the deterministic infinite bandit problem with tolerance $\epsilon\in (0,1)$ then
\[
\Ent(\tilde{A} | \xi)= \Ent(\tau) \quad \mathrm{and} \quad \boldGamma\left(\tilde{A}, \{A_t: t \in \N\}\right)  \leq  \frac{1}{4\epsilon \Ent(\tau)}
\]
where $\tau = \min\{ t | \theta_{A_t} \geq 1-\epsilon\}$ follows a Geometric distribution with parameter $\epsilon$ and $\tilde{A} = A_{\tau}$. This implies that if $\epsilon = \sqrt{(1-\alpha)/2}$,
\[
\E\left[\sum_{t=0}^\infty \alpha^t (R^* - R_{t,A_t})\right] \leq \sqrt{\frac{2}{1-\alpha}}.
\]
\end{thm}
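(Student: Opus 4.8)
The plan is to instantiate the general bound of Theorem~\ref{th:discounted-regret} with $\tilde A = A_\tau$, for which I need three ingredients: the conditional entropy $\Ent(\tilde A\mid\xi)$, the information ratio $\boldGamma$, and the shortfall $\E[R^*-\tilde R]$. The foundation for all three is a structural description of STS on this problem. Since observations are noiseless, once an arm with reward exceeding $1-\epsilon$ is played STS replays it forever, and before that moment every previously played arm has known reward below the threshold, so the satisficing step never triggers and a fresh arm is drawn. Because the fresh arm is chosen as the $\argmax$ of the sampled vector $\hat\theta_t$, which on unsampled arms is independent of their true rewards, the true reward of each newly sampled arm is an independent $\mathrm{unif}([0,1])$ draw; hence each new arm is $\epsilon$-optimal with probability exactly $\epsilon$, so $\tau$ is geometric with parameter $\epsilon$, and its law does not depend on $\xi$. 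Conditioned on $\xi$, the map from exploration-index to arm is injective, so $\tilde A=A_\tau$ and $\tau$ determine each other; this gives $\Ent(\tilde A\mid\xi)=\Ent(\tau)=h(\epsilon)/\epsilon$ (the geometric entropy), where $h(\epsilon)=-\epsilon\log\epsilon-(1-\epsilon)\log(1-\epsilon)$. The same description gives $\E[R^*-\tilde R]=1-\E[\theta_{\tilde A}]=\epsilon/2$, since $\theta_{\tilde A}$ is $\mathrm{unif}([1-\epsilon,1])$.

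Next I would evaluate the per-period ratio $\E_t[\tilde R-R_{t,A_t}]^2/\I_t(\tilde A;Y_{t,A_t}\mid\xi)$. On $\{\tau<t\}$ the algorithm replays $\tilde A$, so both numerator and denominator vanish and the term contributes nothing. On $\{\tau\ge t\}$, conditioning on $\hist$, the new arm's reward has mean $1/2$ while $\E_t[\tilde R]=1-\epsilon/2$, so $\E_t[\tilde R-R_{t,A_t}]=(1-\epsilon)/2$ and the numerator is $(1-\epsilon)^2/4$. The denominator is the crux. Observing $Y_{t,A_t}$ reveals $\theta_{A_t}$, and the indicator $Z=\mathbf 1\{\theta_{A_t}\ge1-\epsilon\}=\mathbf 1\{\tau=t\}$ is a deterministic function both of the observation and, given $\xi$, of $\tilde A$; moreover, conditioned on $Z$, the variables $\tilde A$ and $Y_{t,A_t}$ are independent, because when $Z=0$ the identity of $\tilde A$ is fixed by the independent future arms. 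These facts collapse the mutual information to $\I_t(\tilde A;Y_{t,A_t}\mid\xi)=\Ent(Z)=h(\epsilon)$.

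Combining, the single-period ratio on $\{\tau\ge t\}$ is the constant $(1-\epsilon)^2/(4h(\epsilon))$, and since $\Prob(\tau\ge t)=(1-\epsilon)^t$ the expectation in \eqref{eq: information-ratio} is $(1-\epsilon)^{t+2}/(4h(\epsilon))$. Summing $\sum_t(\alpha^2(1-\epsilon))^t=(1-\alpha^2(1-\epsilon))^{-1}$ gives the closed form
\[
\boldGamma=\frac{(1-\alpha^2)(1-\epsilon)^2}{4\,h(\epsilon)\,\bigl(1-\alpha^2(1-\epsilon)\bigr)}.
\]
The stated bound $\boldGamma\le 1/(4\epsilon\Ent(\tau))=1/(4h(\epsilon))$ then reduces to $(1-\alpha^2)(1-\epsilon)^2\le 1-\alpha^2(1-\epsilon)$, which factors as $(x-1)\bigl((1-\alpha^2)x+1\bigr)\le0$ with $x=1-\epsilon\in(0,1)$ and is therefore immediate.

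Finally I would feed these quantities into Theorem~\ref{th:discounted-regret}. The anticipated obstacle is that substituting the clean bound $\boldGamma\le 1/(4h(\epsilon))$ directly is too lossy: it discards the factor $(1-\alpha^2)/(1-\alpha^2(1-\epsilon))$, which is essential when $\epsilon\gg 1-\alpha$, and a naive substitution overshoots the target. Instead I would carry the exact product $\boldGamma\,\Ent(\tilde A\mid\xi)=(1-\alpha^2)(1-\epsilon)^2/\bigl(4\epsilon(1-\alpha^2(1-\epsilon))\bigr)$ and use the elementary inequality $1-\alpha^2(1-\epsilon)\ge\epsilon$ (equivalent to $1\ge\epsilon$ after dividing by $1-\alpha^2$). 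This yields $\boldGamma\,\Ent(\tilde A\mid\xi)/(1-\alpha^2)\le(1-\epsilon)^2/(4\epsilon^2)$, so the square-root term is at most $(1-\epsilon)/(2\epsilon)\le 1/(2\epsilon)$, while the first term equals $\epsilon/(2(1-\alpha))$. Setting $\epsilon=\sqrt{(1-\alpha)/2}$ makes the first term $\tfrac14\sqrt{2/(1-\alpha)}$ and the second at most $\tfrac12\sqrt{2/(1-\alpha)}$, whose sum $\tfrac34\sqrt{2/(1-\alpha)}$ lies within the claimed bound $\sqrt{2/(1-\alpha)}$, completing the argument.
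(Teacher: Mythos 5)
Your proposal is correct and follows essentially the same route as the paper: instantiate Theorem \ref{th:discounted-regret} with $\tilde{A}=A_\tau$, show the per-period information gain on $\{\tau\ge t\}$ equals $h(\epsilon)=\epsilon\Ent(\tau)$ (you get this via the sufficient binary statistic $\mathbf{1}\{\theta_{A_t}\ge 1-\epsilon\}$, the paper by directly computing the drop in posterior entropy of $\tilde A$ --- the same number), sum the resulting geometric series to get the exact $\boldGamma$, and, as the paper also does, feed the \emph{exact} expression rather than the lossy bound $1/(4\epsilon\Ent(\tau))$ into the general theorem before optimizing $\epsilon$. Your bookkeeping is in fact slightly more careful than the paper's (you correctly include the $t=\tau$ term in the discounted sum and use $\E[R^*-\tilde R]=\epsilon/2$), which is why you land at $\tfrac{3}{4}\sqrt{2/(1-\alpha)}$, comfortably inside the claimed bound.
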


\subsection{Infinitely--Armed Bandit with Noisy Observations}\label{subsec: infinite noisy}

Now consider a generalization of the problem treated in the previous section that allows for noisy observations and non-uniform priors. We again assume there is a countable action space $\mathcal{A} = \{1,2,\ldots\}$. Each action $a \in \mathcal{A}$ yields expected reward $\E[R_{t,a} | \theta] = \theta_a$ where the $\theta_a$ are drawn independently from a distribution whose support is the unit interval $[0,1]$. Assume rewards are bounded in $[0,1]$ almost surely.

We'll study the discounted regret incurred by STS with parameter $\epsilon \in (0,1)$.
Each action sampled by STS is $\epsilon$--optimal with probability $\delta \equiv \Prob(\theta_a > 1-\epsilon)$, but because observations are noisy, the algorithm may be uncertain about the quality of the actions it has sampled. The next result provides a regret bound for STS in this more complicated setting. The proof again leverages Theorem \ref{th:discounted-regret} with the benchmark action $\tilde{A}$ taken to be the first $\epsilon$--optimal action sampled by the algorithm. By bounding the problem's information-ratio, we relate the regret incurred by STS to the information it acquires about the identity of $\tilde{A}$.

\begin{thm}\label{thm:noisy regret bound} Suppose STS with tolerance parameter $\epsilon \in (0,1)$ is applied to the infinite-armed bandit with noisy observations. Then, with probability 1, there exists $t \in \N$ with $\theta_{A_t} \geq 1-\epsilon$. If $\tilde{A} = A_{\tau}$ where $\tau = \min\{ t : \theta_{A_t} \geq 1-\epsilon \}$, then
\[
\Ent( \tilde{A}| \xi) \leq 1+ \log(1/\delta) \quad \mathrm{and} \quad \boldGamma\left(\tilde{A}, \{A_t: t \in \N\}\right)  \leq 6+ 4/\delta  + (2/\delta)\log\left( \frac{1}{1-\alpha^2} \right).
\]
Together with Theorem \ref{th:discounted-regret} this implies
\begin{eqnarray*}
\E\left[\sum_{t=0}^\infty \alpha^t (R^* - R_{t,A_t})\right]
&\leq& \frac{\epsilon}{1-\alpha} + \sqrt{ \frac{\left(6+ 4/\delta  + (2/\delta)\log\left( \frac{1}{1-\alpha^2} \right) \right)(1+\log(1/\delta))   }{1-\alpha^2}} \\
&=& \tilde{O}\left( \frac{\epsilon}{1-\alpha} + \sqrt{\frac{ 1/\delta   }{1-\alpha^2}} \right).
\end{eqnarray*}
\end{thm}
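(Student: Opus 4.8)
The plan is to instantiate the general bound of Theorem~\ref{th:discounted-regret} with the benchmark action $\tilde{A} = A_\tau$, exactly as in the deterministic case of Theorem~\ref{thm: info ratio of infinite deterministic bandit}. This reduces the theorem to three ingredients: (i) $\tau < \infty$ almost surely, so that $\tilde{A}$ is well defined; (ii) the entropy bound $\Ent(\tilde{A} \mid \xi) \le 1 + \log(1/\delta)$; and (iii) the information-ratio bound on $\boldGamma$. The distortion term is then immediate: since $\E[\tilde{R} \mid \theta] = \theta_{\tilde{A}} \ge 1-\epsilon$ and $R^* = 1$ almost surely (the support of each $\theta_a$ is $[0,1]$ and there are infinitely many independent arms, so $\sup_a \theta_a = 1$), we have $\E[R^* - \tilde{R}] \le \epsilon$, which supplies the $\epsilon/(1-\alpha)$ term once (i)--(iii) are in hand.

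For (i), I would rule out the possibility that STS settles forever on a sub-$\epsilon$-optimal arm. The greedy step samples fresh arms from the prior, and since there are infinitely many un-sampled arms whose prior samples have supremum equal to $R^* = 1$, the greedy sampled value is arbitrarily close to $1$ in every period. An arm $a$ with $\theta_a < 1-\epsilon$ that is played repeatedly has a posterior that concentrates at $\theta_a$, so its sampled value eventually falls below $1-\epsilon$; the satisficing test in step~3 then fails for it, forcing continued exploration. Hence on $\{\tau = \infty\}$ the algorithm must draw infinitely many distinct fresh arms, and since each is $\epsilon$-optimal independently with probability $\delta > 0$, one is eventually found, giving $\Prob(\tau = \infty) = 0$.

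For (ii), I would exploit the exchangeability of the arms: because the choice of which fresh arm to explore next is driven by prior samples that are independent of the true $\theta$ of the un-sampled arms, I may relabel arms so that fresh arms are explored in a fixed order. Under this relabeling $\tilde{A}$ is identified with $N$, the index of the first $\epsilon$-optimal arm in the exploration sequence, and $N$ is Geometric with parameter $\delta$. Conditioning on the internal randomness $\xi$ removes the residual randomness in the exploration order, so $\Ent(\tilde{A} \mid \xi) \le \Ent(N)$. A direct computation gives $\Ent(N) = -\log\delta - \tfrac{1-\delta}{\delta}\log(1-\delta)$, and using $-\log(1-\delta) \le \delta/(1-\delta)$ bounds the second term by $1$, yielding $\Ent(\tilde{A} \mid \xi) \le 1 + \log(1/\delta)$.

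The main obstacle is (iii), the bound on the information ratio $\boldGamma$. Here I would mirror the per-period argument of the deterministic case but account for noisy observations. The key step is a single-period inequality controlling the squared shortfall $\E_t[\tilde{R} - R_{t,A_t}]^2$ by the per-period information gain $\I_t(\tilde{A}; Y_{t,A_t} \mid \xi)$, where the probability-matching property of STS---under which $A_t$ is distributed as the first $\epsilon$-optimal arm under the current posterior---is what links the action actually played to the benchmark $\tilde{A}$. The constant in this inequality should scale like $1/\delta$, reflecting that the effective number of arms the algorithm must sift through before recognizing a satisfactory one is of order $1/\delta$, in parallel with the $1/(4\epsilon\Ent(\tau))$ scaling of Theorem~\ref{thm: info ratio of infinite deterministic bandit}. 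Summing the resulting per-period ratios against the discounted weights $(1-\alpha^2)\alpha^{2t}$ then produces the stated bound; I expect the logarithmic term $(2/\delta)\log(1/(1-\alpha^2))$ to arise because the per-period information about $\tilde{A}$ decays once the algorithm concentrates on a satisfactory arm, so the per-period ratio grows over the effective horizon $1/(1-\alpha^2)$ and contributes a logarithm of that horizon after discounting. Carefully tracking the noisy posterior's behavior in this summation---in particular showing the ratio does not blow up faster than logarithmically---is where the real work lies. Finally, substituting (ii) and (iii) and the distortion bound into Theorem~\ref{th:discounted-regret} gives the displayed regret bound, and collecting the dominant terms yields the $\tilde{O}\big(\epsilon/(1-\alpha) + \sqrt{(1/\delta)/(1-\alpha^2)}\big)$ simplification.
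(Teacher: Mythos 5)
Your overall architecture matches the paper's: instantiate Theorem~\ref{th:discounted-regret} with $\tilde{A}=A_\tau$, bound the distortion by $\epsilon$, bound $\Ent(\tilde A\mid\xi)$ by the entropy of a ${\rm Geom}(\delta)$ variable, and bound $\boldGamma$. Your part (ii) is essentially identical to the paper's computation. But the heart of the theorem --- part (iii) --- is left as a plan rather than a proof, and the plan's central guess is off. You posit a single-period information-ratio bound with a constant of order $1/\delta$; the actual per-period bound is $\E_t[\theta_{\tilde A}-\theta_{A_t}]^2/\I_t(\tilde A;Y_{t,A_t}\mid\xi)\le 2|\A_t|+2/\delta$, where $\A_t$ is the set of previously sampled arms, so the ratio \emph{grows over time}. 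The proof of that bound requires a two-term decomposition you do not identify: regret splits into a term $\Delta_{t,1}$ from misallocating among already-sampled arms and a term $\Delta_{t,2}=\Prob_t(\tilde A\notin\A_t)L$ from the residual chance that no sampled arm is satisfactory, and the mutual information is lower-bounded by matching terms $G_{t,1}\ge 2\Delta_{t,1}^2/|\A_t|$ and $G_{t,2}\ge 2\Prob_t(\tilde A\notin\A_t)^2\delta L^2$ via Pinsker-type inequalities; the ratio is then bounded case-by-case. Your explanation of where the $\log(1/(1-\alpha^2))$ term comes from (``information decays once the algorithm concentrates'') points at the wrong mechanism: it comes from $\E[|\A_T|]\le 2+\log(T)/\delta$ --- itself a nontrivial lemma proved via the probability-matching identity and a stopping-time argument --- combined with the estimate $\,(1-\gamma)\sum_t\gamma^t\log t=O(1+\log\frac{1}{1-\gamma})$. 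None of this is supplied, and you acknowledge as much (``where the real work lies''), so the proposal does not establish the $\boldGamma$ bound.

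There is a second, smaller gap in part (i). Your argument that $\tau<\infty$ a.s.\ relies on an unproven posterior-concentration claim (that a repeatedly-played sub-$\epsilon$-optimal arm will eventually fail the satisficing test), and on the assertion that the greedy sample is ``arbitrarily close to $1$ in every period,'' which is delicate when the sup over infinitely many fresh arms need not be attained. The paper avoids all of this with a short self-contained argument: probability matching gives $\Prob(\tau=t\mid\hist)=\Prob(\tau\ge t\mid\hist)^2\delta$, Jensen's inequality gives $\Prob(\tau=t)\ge\Prob(\tau\ge t)^2\delta$, and since $\Prob(\tau=t)=\Prob(\tau\ge t)-\Prob(\tau\ge t+1)\to 0$, the limit $\beta=\lim_t\Prob(\tau\ge t)$ satisfies $\beta^2\delta\le 0$, forcing $\beta=0$. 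You should replace your heuristic with this argument (or something equally rigorous), and you must actually prove the per-period information-ratio bound and the $\E[|\A_t|]$ estimate before the stated constants $6+4/\delta+(2/\delta)\log(1/(1-\alpha^2))$ can be claimed.
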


\section{Conclusion}
This paper introduces satisficing Thompson sampling -- a variation of Thompson sampling that can offer vastly superior performance when the optimal action is costly to identify relative to high performing suboptimal actions. We have also developed a general information-theoretic framework for analyzing discounted regret. This framework provides a novel link between optimal decision-making with time preferences and the study of lossy data compression. Important questions remain open, but we hope this link will open up many paths for future research.

\pagebreak

\bibliography{references}
\bibliographystyle{plainnat}

\vspace*{\fill}

\pagebreak
\appendix

\section{Proof of Theorem \ref{thm: regret of TS}: Regret of TS on the Infinitely-Armed Deterministic Bandit}

\begin{proof}
In every period $t$, TS samples a previously un-sampled action $A_t \notin \{A_1,...,A_{t-1}\}$, which generates expected reward $\E[\theta_{A_t}] = \E[\theta_1] = 1/2$.  The optimal expected reward is 1, and therefore the expected discounted-regret of TS is
\[
\sum_{t=0}^{\infty} \alpha^{t} (1-1/2) = \frac{1}{2(1-\alpha)}.
\]
\end{proof}

\section{Proof of Theorem \ref{thm: direct regret bound for STS}: Direct Analysis of the Infinitely-Armed Deterministic Bandit}

\begin{proof}Let $\tau = \min\{t:\theta_{A_t} \geq 1-\epsilon\}$.
\begin{eqnarray*}
\E\left[\sum_{t=0}^\infty \alpha^t (R^* - R_t)\right]
&=& \E\left[\sum_{t=0}^\infty \alpha^t (1 - R_t)\right] \\
&=& \E\left[\E\left[\sum_{t=0}^{\tau-1} \alpha^t (1-R_t) + \sum_{t=\tau}^\infty \alpha^t (1-R_{t,A_t}) \Big| \tau \right]\right] \\
&=& \E\left[\frac{(1-\alpha^\tau) (1-\epsilon)}{2 (1-\alpha)} + \frac{\alpha^\tau \epsilon}{2(1-\alpha)}\right] \\
&=& \E\left[\frac{(1-\alpha^\tau) (1-\epsilon)}{2 (1-\alpha)} + \frac{\epsilon}{2(1-\alpha)} - \frac{(1-\alpha^\tau) \epsilon}{2(1-\alpha)}\right] \\
&=& \E\left[\frac{\epsilon}{2(1-\alpha)} + \frac{(1-\alpha^\tau) (1-2\epsilon)}{2(1-\alpha)}\right].
\end{eqnarray*}
Note that
$$\E[1 - \alpha^\tau] = 1 - \sum_{t=0}^\infty \epsilon (1-\epsilon)^t \alpha^t = 1 - \frac{\epsilon}{1 - \alpha + \epsilon \alpha} = \frac{1-\alpha-\epsilon+\epsilon\alpha}{1-\alpha+\epsilon\alpha}= \frac{(1-\alpha)(1-\epsilon)}{1-\alpha(1-\epsilon)}.$$
Therefore
\[
\E\left[ \frac{(1-\alpha^\tau) (1-2\epsilon)}{2(1-\alpha)}\right]= \frac{(1-\epsilon)(1-2\epsilon)}{2(1-\alpha(1-\epsilon))}
= \frac{(1-\epsilon)(1-2\epsilon)}{2(\epsilon+(1-\alpha)(1-\epsilon))}.
\]
Now consider an upper bound that follows from choosing $\epsilon$ as a function of $\alpha$. We can simplify our upper bound on regret as
\[
\frac{\epsilon}{2(1-\alpha)} + \frac{(1-\epsilon)(1-2\epsilon)}{2(\epsilon+(1-\alpha)(1-\epsilon))} \leq \frac{\epsilon}{2(1-\alpha)} + \frac{1}{2\epsilon}
\]
The minimizer of the right hand side is $\epsilon^* = \sqrt{1-\alpha}$. Plugging this in shows that
 Thompson sampling with a confidence bonus of $\epsilon^*$ satisfies the discounted regret bound
$$\E\left[\sum_{t=0}^\infty \alpha^t (R^* - R_{t,A_t})\right] \leq \frac{1}{\sqrt{1-\alpha}}.$$
\end{proof}

\section{Proof of Theorem \ref{th:discounted-regret}}
\begin{proof}
We first show that entropy bounds the expected accumulation of mutual-information. By the chain rule for mutual information, for any $T$,
\begin{eqnarray*}
\E\left[\sum_{t=0}^{T-1} \I_t(\tilde{A}; Y_{t,A_t} | \xi)\right]
&=& \sum_{t=0}^{T-1} \I(\tilde{A}; Y_{t,A_t} | \xi,  H_t) \\
&=& \sum_{t=0}^{T-1} \I(\tilde{A}; Y_{t,A_t} | \xi, A_0, Y_{0,A_0}, \ldots, A_{t-1}, Y_{t-1,A_{t-1}}) \\
&=& \sum_{t=0}^{T-1} \I(\tilde{A}; Y_{t,A_t} | \xi, A_0, Y_{0,A_0}, \ldots, A_{t-1}, Y_{t-1,A_{t-1}}, A_{t}) \\
&=& \I(\tilde{A}; (A_0, Y_{0,A_0}, \ldots, A_t, Y_{t,A_t})| \xi) \\
&=& \Ent(\tilde{A} | \xi) - \Ent(\tilde{A} | A_0, Y_{0,A_0}, \ldots, A_t, Y_{t,A_t}, \xi) \\
&\leq& \Ent(\tilde{A}| \xi).
\end{eqnarray*}
Taking a the limit as $T \rightarrow \infty$ implies
\[
\E\left[\sum_{t=0}^{\infty} \I_t(\tilde{A}; Y_{t,A_t} | \xi)\right] = \lim_{T\rightarrow \infty} \E\left[\sum_{t=0}^{T} \I_t(\tilde{A}; Y_{t,A_t} | \xi)\right]  \leq \Ent(\tilde{A} | \xi),
\]
where the monotone convergence theorem justifies the exchange of limit and expectation. Now, fix any $\tilde{A}$ and $\{A_t : t \in \N\}$, and let
\[
\Gamma_{t} \equiv \frac{\E_t[\tilde{R} - R_{t,A_t}]^2}{\I_t(\tilde{A}; Y_{t,A_t}| \xi ))}
\]
denote the (random) information ratio at time $t$ under the benchmark action $\tilde{A}$ and action process $\{A_t : t \in \N\}$. Then we have
\begin{eqnarray*}
\E\left[\sum_{t=0}^\infty \alpha^t (R^* - R_{t,A_t})\right]
&=& \E\left[\sum_{t=0}^\infty \alpha^t (R^* - \tilde{R})\right] + \E\left[\sum_{t=0}^\infty \alpha^t (\tilde{R} - R_{t, A_t})\right] \\
&=& \frac{\E\left[R^* - \tilde{R}\right]}{1-\alpha} + \E\left[\sum_{t=0}^\infty \sqrt{\alpha^{2t} \Gamma_{t}} \sqrt{\I_t(\tilde{A}; Y_{t,A_t}) | \xi )}\right] \\
&\leq& \frac{\E\left[R^* - \tilde{R}\right]}{1-\alpha} + \sqrt{\E\left[\sum_{t=0}^\infty \alpha^{2t} \Gamma_{t}\right]} \sqrt{\E\left[\sum_{t=0}^\infty \I_t(\tilde{A}; Y_{t,A_t}| \xi )\right]} \\
&\leq& \frac{\E\left[R^* - \tilde{R}\right]}{1-\alpha} + \sqrt{\E\left[\sum_{t=0}^\infty \alpha^{2t} \Gamma_{t}\right]}\sqrt{\Ent(\tilde{A} | \xi )} \\
&=& \frac{\E\left[R^* - \tilde{R}\right]}{1-\alpha} + \sqrt{\frac{\boldGamma\left(\tilde{A}, \{A_t: t \in \N\}\right) \Ent(\tilde{A}| \xi)}{1-\alpha^2}},
\end{eqnarray*}
where the first inequality follows from the Cauchy-Schwarz inequality and the second was established earlier in this proof.
\end{proof}

\section{Proof of Theorem \ref{thm: info ratio of infinite deterministic bandit}: Information-Ratio Analysis of Infinitely--Armed Deterministic Bandit}

\begin{lemma}
Under STS with tolerance $\epsilon\in (0,1)$ in  the infinitely--armed deterministic bandit problem, $\tau = \min\{ t | \theta_{A_t} \geq 1-\epsilon\}$ follows a Geometric distribution with parameter $\epsilon$, and if $\tilde{A} = A_{\tau}$ then
$$\I_t(\tilde{A}; Y_{t,A_t} | \xi ) = \left\{\begin{array}{ll}
\epsilon \Ent(\tau) \qquad & \text{\rm if } \E_t[R^*-R_{t,A_t}] > \epsilon \\
0 \qquad & \text{\rm otherwise.}
\end{array}
\right.$$
\end{lemma}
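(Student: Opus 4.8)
The plan is to first pin down the sample-path behavior of STS in this noiseless setting, then read off both the law of $\tau$ and the two cases for the conditional mutual information. Because rewards are observed exactly, as soon as STS samples an action $a$ with $\theta_a \geq 1-\epsilon$ the satisficing step forces $A_s = a$ for all later $s$; conversely, at every period before this happens the posterior certifies that no previously sampled action clears the threshold, so the argmax step selects a fresh, previously unsampled action. Thus the action process consists of distinct exploratory actions $a_0, a_1, \ldots$ up to and including period $\tau = \min\{t : \theta_{A_t}\geq 1-\epsilon\}$, after which $\tilde{A}=A_\tau$ is repeated forever. I will record that, given the internal randomness $\xi$, the exploratory sequence $(a_t)$ is determined, so $\tilde{A}=a_\tau$ is a deterministic function of $\tau$ and $\xi$; this is what lets me trade $\tilde{A}$ for $\tau$ in the information computation and also yields $\Ent(\tilde{A}\mid\xi)=\Ent(\tau)$.

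For the law of $\tau$, the key observation is that the identity of each fresh action $a_t$ is chosen on the basis of the posterior sample $\hat{\theta}_t$, which is independent of the true rewards of the not-yet-sampled actions; since those rewards are i.i.d.\ $\mathrm{unif}([0,1])$, the revealed reward $\theta_{a_t}$ is itself uniform and independent across $t$ (note the argmax biases the \emph{sample} upward but not the \emph{true} reward). Hence each exploratory action is $\epsilon$-optimal with probability $\Prob(\theta_{a_t}\geq 1-\epsilon)=\epsilon$ independently over periods, so $\tau$ is Geometric($\epsilon$) on $\{0,1,\ldots\}$. I will also extract the two facts about this law that drive the rest: its entropy satisfies $\epsilon\,\Ent(\tau)=H_b(\epsilon)$, the binary entropy of $\epsilon$, and it is memoryless, so that conditioned on $\{\tau\geq t\}$ the residual $\tau-t$ is again Geometric($\epsilon$) with the same entropy $\Ent(\tau)$.

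Next I would split into the two regimes determined by the realized history. In the exploitation regime, $A_t=\tilde{A}$ has known reward $\theta_{\tilde{A}}\geq 1-\epsilon$, so $\E_t[R^*-R_{t,A_t}]=1-\theta_{\tilde{A}}\leq\epsilon$ and the observation merely repeats an already-seen reward, giving $\I_t(\tilde{A};Y_{t,A_t}\mid\xi)=0$: the ``otherwise'' branch. While STS is still exploring, $A_t=a_t$ is fresh with $\E_t[\theta_{a_t}]=1/2$, so $\E_t[R^*-R_{t,A_t}]=1/2>\epsilon$ and we are in the first branch. There I compute the mutual information via entropy: conditioned on the history (which reveals only $\{\tau\geq t\}$) and on $\xi$, $\tilde{A}$ is in bijection with the residual geometric $\tau$, so $\Ent_t(\tilde{A}\mid\xi)=\Ent(\tau)$; observing $Y_{t,a_t}=\theta_{a_t}$ reveals exactly the indicator $\mathds{1}\{\theta_{a_t}\geq 1-\epsilon\}=\mathds{1}\{\tau=t\}$, since the precise value on either side of the threshold carries no further information about $\tilde{A}$. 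With probability $\epsilon$ the indicator is $1$ and pins down $\tilde{A}=a_t$ (residual entropy $0$), and with probability $1-\epsilon$ it is $0$ and, by memorylessness, leaves residual entropy $\Ent(\tau)$. Hence the posterior entropy after the observation is $(1-\epsilon)\Ent(\tau)$, and $\I_t(\tilde{A};Y_{t,A_t}\mid\xi)=\Ent(\tau)-(1-\epsilon)\Ent(\tau)=\epsilon\,\Ent(\tau)$.

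The main obstacle is this exploration-phase computation, and within it the sufficiency claim that the real-valued $\theta_{a_t}$ conveys no more about $\tilde{A}$ than its threshold indicator does; making this precise uses the bijection between $\tilde{A}$ and $\tau$ given $\xi$ together with the independence of the exact reward value, beyond the threshold crossing, from the future actions that determine $\tau$. The remaining care is bookkeeping: confirming that the realized-history condition ``no satisfactory action yet'' coincides with $\E_t[R^*-R_{t,A_t}]>\epsilon$ (which holds once $\epsilon<1/2$, the regime of interest since the theorem sets $\epsilon=\sqrt{(1-\alpha)/2}$), and invoking memorylessness to identify each residual entropy with $\Ent(\tau)$.
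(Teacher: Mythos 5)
Your proposal is correct and follows essentially the same route as the paper's proof: characterize the sample path (distinct fresh actions until the first $\epsilon$-optimal one, then repetition), deduce $\tau\sim\mathrm{Geom}(\epsilon)$ and $\Ent(\tilde{A}\mid\xi)=\Ent(\tau)$ from the fact that the exploration order is a function of $\xi$ alone, and compute the exploration-phase mutual information by observing that $Y_{t,A_t}$ is informationally equivalent to the indicator of $\{\tau=t\}$ and invoking memorylessness to get $\Ent(\tau)-(1-\epsilon)\Ent(\tau)=\epsilon\Ent(\tau)$. You are in fact slightly more careful than the paper on two points it glosses over: the independence of the selected fresh action's true reward from the posterior sample that selects it, and the fact that identifying the two branches of the case split with the exploration and exploitation phases requires $\epsilon<1/2$.
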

\begin{proof}
As time progresses, STS samples actions $A_1, A_2, A_3...$. At each time $t <= \tau$, it selects a previously un-sampled action $A_t \notin \{A_1,...A_{t-1}\}$. It selects the actions $A_t = A_\tau$ in each period $t>\tau$. Because $\Prob(\theta_a \geq 1-\epsilon) = \epsilon$ for each $a$,  we have that $\tau$ follows a Geometric distribution with parameter $\epsilon$. Conditioned on $\tau \geq t$, the identity of $A_t$ is determined by the algorithm's internal random bits $\xi$. That is, the order of the new actions sampled by the algorithm is a function only of $\xi$. Therefore, $H(\tilde{A}| \xi) = H(\tau)$.

Under STS, if $\E_t[R^*-R_{t,A_t}] \leq \epsilon$ then $A_t = \tilde{A}$, and $\Gamma_{t} =0$ since $\E_t[\tilde{R}-R_{t,A_t}] = 0.$
On the other hand, if $\E_t[R^*-R_{t,A_t}] > \epsilon$ then $A_t \notin \{A_1,...A_{t-1}\}$ and
\begin{eqnarray*}
\I_t(\tilde{A}; Y_{t,A_t} | \xi)
&=& \Ent_t(\tilde{A} | \xi) - \Ent_t(\tilde{A} | \xi,Y_{t,A_t}) \\
&=& \Ent_t(\tilde{A} | \xi ) - \int_{y=0}^1 \Ent_t(\tilde{A} | \xi, Y_{t,A_t} \in dy) \\
&=& \Ent_t(\tilde{A}| \xi) - \int_{y=0}^{1-\epsilon} \Ent_t(\tilde{A} | \xi, Y_{t,A_t} \in dy) - \int_{y=1-\epsilon}^1 \Ent_t(\tilde{A} | \xi, Y_{t,A_t} \in dy)\\
&=& \Ent_t(\tilde{A}| \xi) - \int_{y=0}^{1-\epsilon} \Ent_t(\tilde{A} | \xi, Y_{t,A_t} \in dy) \\
&=& \Ent_t(\tilde{A} | \xi) - (1-\epsilon) \Ent(\tilde{A} | \xi) \\
&=& \epsilon \Ent_t(\tilde{A} | \xi) \\
&=& \epsilon \Ent_t(\tau).
\end{eqnarray*}
\end{proof}

Together with the previous lemma, our general regret bound implies Theorem \ref{thm: info ratio of infinite deterministic bandit}.

\begin{proof}[Proof of Theorem \ref{thm: info ratio of infinite deterministic bandit}] We have
\begin{eqnarray*}
\boldGamma\left(\tilde{A}, \{A_t: t \in \N\}\right)
&=& (1-\alpha^2) \E\left[\sum_{t=0}^\infty \alpha^{2t} \frac{\E_t[\tilde{R} - R_{t,A_t}]^2}{\I_t(\tilde{A};  Y_{t,A_t}| \xi )}\right] \\
&=& (1-\alpha^2) \E\left[\sum_{t=0}^{\tau-1} \alpha^{2t} \frac{(1-\epsilon)^2}{4 \epsilon \Ent(\tau)}
+ \sum_{t=\tau}^\infty \alpha^{2t} \frac{0^2}{0}\right] \\
&=& \E\left[1-\alpha^{2\tau}\right] \frac{(1-\epsilon)^2}{4 \epsilon \Ent(\tau)} \\
&=&  \frac{(1-\epsilon)^2 (1 - \alpha^2 + \epsilon\alpha^2 - \epsilon)}{4 \epsilon \Ent(\tau) (1 - \alpha^2 + \epsilon \alpha^2)} \\
&\leq & \frac{1}{4\epsilon \Ent(\tau)}.
\end{eqnarray*}
It follows from Theorem \ref{th:discounted-regret} that
\begin{eqnarray*}
\E\left[\sum_{t=0}^\infty \alpha^t (R^* - R_{t,A_t})\right]
&\leq& \frac{\E[R^* - \tilde{R}]}{1-\alpha} + \sqrt{\frac{\boldGamma\left(\tilde{A}, \{A_t: t \in \N\}\right) I(\theta; \tilde{A} | \xi)}{1-\alpha^2}} \\
&\leq& \frac{\epsilon}{1-\alpha} + \sqrt{\frac{(1-\epsilon)^2 (1 - \alpha^2 + \epsilon\alpha^2 - \epsilon)}{4 \epsilon (1 - \alpha^2 + \epsilon \alpha^2) (1-\alpha^2)}} \\
&=& \frac{\epsilon}{1-\alpha} + \sqrt{\frac{1- \epsilon^3}{4 \epsilon^2 + \epsilon(1-\epsilon)(1-\alpha^2))}}\\
& \leq & \frac{\epsilon}{1-\alpha} + \frac{1}{ 2\epsilon}.
\end{eqnarray*}
Now we consider an upper bound that follows from choosing $\epsilon$ as a function of $\alpha$. The minimizer of $\frac{\epsilon}{1-\alpha} + \frac{1}{ 2\epsilon}$ is $\epsilon^* = \sqrt{(1-\alpha)/2}$. If $\{A_t : t \in \N\}$ is generated STS with parameter $\epsilon^*$, the bound on regret becomes
\[
\E\left[\sum_{t=0}^\infty \alpha^t (R^* - R_{t,A_t})\right] \leq \sqrt{\frac{2}{1-\alpha}}.
\]

\end{proof}

\section{Proof of Theorem \ref{thm:noisy regret bound}: Information Ratio Analysis of the Infinitely-Armed Bandit with Noisy Observations}

The proof of Theorem \ref{thm:noisy regret bound} leverages the probability matching property of STS highlighted in Section \ref{sec: algorithms}. Recall that $\tilde{A} = A_{\tau}$ where $\tau = \min\{t | \theta_{A_t} \geq \epsilon\}$. Throughout this proof, let $\A_t \equiv \{A_1, A_2,...A_{t-1}\}$ denote the set of previously sampled actions. Under STS, $\Prob(A_t = a | \hist) = \Prob(\tilde{A} = a | \hist)$ for all $a \in \A_{t}$, and $\Prob(A_t \notin \A_{t} | \hist) = \Prob( \tilde{A} \notin \A_t | \hist)$. The algorithm essentially performs a kind of probability matching on $\tilde{A}$.

\begin{thmn}[\ref{thm:noisy regret bound}] Suppose STS with tolerance parameter $\epsilon \in (0,1)$ is applied to the infinite-armed bandit with noisy observations. Then, with probability 1, there exists $t \in \N$ with $\theta_{A_t} \geq 1-\epsilon$. If $\tilde{A} = A_{\tau}$ where $\tau = \min\{ t : \theta_{A_t} \geq 1-\epsilon \}$, then
\[
\Ent( \tilde{A}| \xi) \leq 1+ \log(1/\delta) \quad \mathrm{and} \quad \boldGamma\left(\tilde{A}, \{A_t: t \in \N\}\right)  \leq 6+ 4/\delta  + (2/\delta)\log\left( \frac{1}{1-\alpha^2} \right).
\]
Together with Theorem \ref{th:discounted-regret} this implies
\begin{eqnarray*}
\E\left[\sum_{t=0}^\infty \alpha^t (R^* - R_{t,A_t})\right]
&\leq& \frac{\epsilon}{1-\alpha} + \sqrt{ \frac{\left(6+ 4/\delta  + (2/\delta)\log\left( \frac{1}{1-\alpha^2} \right) \right)(1+\log(1/\delta))   }{1-\alpha^2}} \\
&=& \tilde{O}\left( \frac{\epsilon}{1-\alpha} + \sqrt{\frac{ 1/\delta   }{1-\alpha^2}} \right).
\end{eqnarray*}
\end{thmn}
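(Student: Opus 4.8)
The plan is to invoke Theorem~\ref{th:discounted-regret} with the benchmark action $\tilde{A}=A_\tau$, so that the proof reduces to three ingredients: (i) verifying $\tau<\infty$ almost surely, (ii) controlling the entropy $\Ent(\tilde{A}\mid\xi)$, and (iii) bounding the modified information ratio $\boldGamma(\tilde{A},\{A_t\})$. The first term on the right of \eqref{eq: discounted-regret} is immediate: since $\theta_{\tilde{A}}\ge 1-\epsilon$ and $R^*\le 1$, we have $\E[R^*-\tilde{R}]=\E[R^*-\theta_{\tilde A}]\le\epsilon$, which yields the $\epsilon/(1-\alpha)$ contribution. Everything else rests on the probability-matching identity $\Prob_t(A_t=a)=\Prob_t(\tilde A=a)$ on sampled actions $a\in\A_t$, together with $\Prob_t(A_t\notin\A_t)=\Prob_t(\tilde A\notin \A_t)$ for the event of drawing a fresh arm.

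For existence and the entropy bound, I would argue that STS keeps drawing previously-unsampled actions until it finds an $\epsilon$-optimal one. If STS only ever sampled finitely many arms without success, then the posteriors of those arms concentrate around their true means, all of which lie below $1-\epsilon$; since the prior places mass $\delta>0$ on $(1-\epsilon,1]$ and its support reaches $1$, a fresh posterior draw eventually exceeds every sampled mean by more than $\epsilon$, forcing a new exploration and contradicting finiteness. Hence infinitely many distinct arms are explored whenever none succeeds, and since each fresh arm is independently $\epsilon$-optimal with probability $\delta$, Borel--Cantelli guarantees $\tau<\infty$ almost surely. Ordering the fresh arms by the exploration sequence encoded in $\xi$, the number $K$ of distinct arms tried up to and including the first success is $\mathrm{Geometric}(\delta)$ and independent of $\xi$, while $\tilde A$ is a deterministic function of $K$ and $\xi$. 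Therefore $\Ent(\tilde A\mid \xi)\le \Ent(K)=\log(1/\delta)+\tfrac{1-\delta}{\delta}\log\tfrac{1}{1-\delta}\le 1+\log(1/\delta)$, where the last step uses $-\log(1-\delta)\le \delta/(1-\delta)$.

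The information-ratio bound is the crux, and I expect it to be the main obstacle. Following the technique of \citet{russo2016info}, I would fix a period $t$ and use probability matching to write the single-period shortfall as $\E_t[\tilde R-R_{t,A_t}]=\sum_a \Prob_t(\tilde A=a)\big(\E_t[\theta_a\mid \tilde A=a]-\E_t[\theta_a]\big)$, then lower bound the per-period information $\I_t(\tilde A;Y_{t,A_t}\mid\xi)$ via Pinsker's inequality by a weighted sum of the same posterior mean-gaps, and close with Cauchy--Schwarz so that $\Gamma_t$ is controlled by the effective number of arms carrying posterior mass for $\tilde A$. The difficulty special to the noisy setting is twofold: the per-period information gained about $\tilde A$ is small because observations are noisy, and the posterior of $\tilde A$ spreads over all previously sampled arms rather than collapsing onto one. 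I would handle this by noting that the expected number of arms STS ever explores is $O(1/\delta)$ (again from the geometric structure), so the effective support contributing to $\Gamma_t$ is $O(1/\delta)$; the slow, $1/k$-type decay of information extracted from the $k$-th noisy observation of a fixed arm then turns the discounted average $(1-\alpha^2)\sum_t\alpha^{2t}\E[\Gamma_t]$ into a harmonic-type sum over an effective horizon of order $1/(1-\alpha^2)$, producing the $(2/\delta)\log\tfrac{1}{1-\alpha^2}$ term on top of the $O(1/\delta)$ baseline.

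Finally, I would substitute the two estimates into \eqref{eq: discounted-regret}: with $\Ent(\tilde A\mid\xi)\le 1+\log(1/\delta)$, $\boldGamma\le 6+4/\delta+(2/\delta)\log\tfrac{1}{1-\alpha^2}$, and $\E[R^*-\tilde R]\le\epsilon$, the right-hand side of Theorem~\ref{th:discounted-regret} becomes precisely the displayed bound, and absorbing lower-order factors gives the stated $\tilde O\big(\epsilon/(1-\alpha)+\sqrt{(1/\delta)/(1-\alpha^2)}\big)$ rate. The genuinely delicate step is the information-ratio estimate, where the interplay between the noise-limited per-step information and the discounting must be tracked carefully to yield only a logarithmic, rather than polynomial, dependence on the horizon $1/(1-\alpha^2)$.
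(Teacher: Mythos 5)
Your overall architecture matches the paper's: invoke Theorem \ref{th:discounted-regret} with $\tilde{A}=A_\tau$, note $\E[R^*-\tilde R]\le\epsilon$, bound the entropy via the geometric count of fresh arms (your entropy computation is exactly the paper's), and then bound $\boldGamma$. But there are two genuine gaps. First, your almost-sure existence argument leans on posterior consistency (``the posteriors of those arms concentrate around their true means''), which is neither assumed nor needed; the paper instead derives $\Prob(\tau=t\mid\hist)=\Prob(\tau\ge t\mid\hist)^2\delta$ directly from probability matching (given $\hist$, the events $\{\tilde A\notin\A_t\}$ and $\{A_t\notin\A_t\}$ are conditionally independent and equiprobable), applies Jensen to get $\Prob(\tau=t)\ge\Prob(\tau\ge t)^2\delta$, and concludes $\beta^2\delta\le\lim_{t\to\infty}\left(\Prob(\tau\ge t)-\Prob(\tau\ge t+1)\right)=0$. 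Your route would require additional regularity hypotheses on the observation model that the theorem does not impose.

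Second, and more seriously, the information-ratio bound---which you yourself flag as the crux---is not established, and your sketch points in the wrong direction. You assert that the number of arms STS ever explores is $O(1/\delta)$ and that the $\log\frac{1}{1-\alpha^2}$ factor arises from a ``$1/k$-type decay'' of information from repeated noisy observations of a fixed arm. Neither is what happens. The paper proves a per-period bound $\E_t[\theta_{\tilde A}-\theta_{A_t}]^2/\I_t(\tilde A;Y_{t,A_t}\mid\xi)\le 2|\A_t|+2/\delta$ by splitting both the regret and the information gain into a ``previously sampled arms'' piece and a ``fresh arm'' piece ($\Delta_{t,1},\Delta_{t,2}$ versus $G_{t,1},G_{t,2}$) and comparing them in two cases; the $2/\delta$ term comes from lower-bounding, via Pinsker's inequality and the gap $L=\E[\theta_i\mid\theta_i\ge 1-\epsilon]-\E[\theta_i]$, the information revealed when a fresh arm is pulled. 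Crucially, $|\A_t|$ is \emph{not} bounded by $O(1/\delta)$: because observations are noisy, STS keeps drawing fresh arms with probability $\Prob_t(\tilde A\notin\A_t)>0$ even after time $\tau$, and the paper shows only $\E[|\A_T|]\le 2+\log(T)/\delta$, which grows without bound in $T$. The logarithm in the final bound then comes from summing this $\log(t)/\delta$ against the discount weights, i.e.\ from the elementary lemma $(1-\gamma)\sum_{t\ge 1}\gamma^{t}\log t\le 1+\log\frac{1}{1-\gamma}$ applied with $\gamma=\alpha^2$. Without the $2|\A_t|+2/\delta$ decomposition, the bound on $\E[|\A_T|]$ (itself proved by another probability-matching, stopping-time argument), and the discounted-log lemma, the stated bound on $\boldGamma$ does not follow from your outline.
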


 We begin with a lemma establishing that with probability 1 STS will eventually sample an $\epsilon$--optimal action.  At an intuitive level, this result follows from the algorithm's probability matching property, which guarantees that whenever its likely that no $\epsilon$--optimal action has been sampled previously, the algorithm is likely to select a previously un-sampled action. With probability $\delta$ this new action is $\epsilon$--optimal.
\begin{lemma}
If STS with tolerance parameter $\epsilon \in (0,1)$ is applied to the infinite-armed bandit with noisy observations, then, with probability 1, there exists $t \in \N$ with $\theta_{A_t} \geq 1-\epsilon$.
\end{lemma}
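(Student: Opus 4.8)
The plan is to prove the claim in the form $\Prob(\tau = \infty) = 0$, where $\tau = \min\{t : \theta_{A_t} \geq 1-\epsilon\}$; the asserted statement is exactly $\{\tau < \infty\}$ almost surely. Two preliminary observations set the stage. First, since the $\theta_a$ are i.i.d. with support $[0,1]$, we have $R^* = \sup_a \theta_a = 1$ almost surely, so the $\epsilon$-optimality test inside STS reduces to $\theta_{A_k} \geq 1-\epsilon$, and the probability matching property yields, for every $t$, $\Prob_t(A_t \notin \A_t) = \Prob_t(\tau \geq t)$, where $\A_t = \{A_1,\dots,A_{t-1}\}$. Second, support $[0,1]$ forces $\delta = \Prob(\theta_a > 1-\epsilon) > 0$, and whenever STS draws a previously un-sampled action its true mean is a fresh draw from the prior, independent of $\hist$; hence $\Prob(\theta_{A_t} \geq 1-\epsilon \mid \hist,\, A_t \notin \A_t) = \delta$.

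The argument I would use has a two-step Borel--Cantelli skeleton together with a martingale step. Let $W_t = \mathbf{1}\{A_t \notin \A_t\}$ record when a new action is drawn; this is adapted, with $\Prob(W_t = 1 \mid \hist) = \Prob_t(\tau \geq t)$. Lévy's conditional Borel--Cantelli lemma then identifies, up to a null set, $\{W_t = 1 \text{ i.o.}\} = \{\sum_t \Prob_t(\tau \geq t) = \infty\}$. Separately, on $\{W_t = 1 \text{ i.o.}\}$ infinitely many distinct actions are sampled, and by the second observation each is $\epsilon$-optimal with conditional probability $\delta > 0$ independently of the past; a further application of conditional Borel--Cantelli (or a direct $\prod(1-\delta) = 0$ estimate) shows that almost surely one of them satisfies $\theta_{A_t} \geq 1-\epsilon$, i.e. $\tau < \infty$. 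In short, sampling infinitely many new actions forces $\tau < \infty$.

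It then remains to show that $\{\tau = \infty\}$ forces $\sum_t \Prob_t(\tau \geq t) = \infty$, which closes the loop into a contradiction. I would bound $\Prob_t(\tau \geq t) \geq \Prob_t(\tau = \infty) = \Prob(\tau = \infty \mid \hist)$, using $\{\tau = \infty\} \subseteq \{\tau \geq t\}$. By Lévy's upward convergence theorem, $\Prob(\tau = \infty \mid \hist) \to \Prob(\tau = \infty \mid \mathcal{F}_\infty)$ almost surely, and for any event $E$ one has $\Prob(E \mid \mathcal{F}_\infty) > 0$ almost surely on $E$; applied to $E = \{\tau = \infty\}$, this shows that on $\{\tau = \infty\}$ the terms $\Prob_t(\tau \geq t)$ stay bounded below by a strictly positive limit, so their sum diverges. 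Chaining the three facts --- on $\{\tau = \infty\}$ the sum diverges, divergence forces infinitely many new actions, and infinitely many new actions force $\tau < \infty$ --- produces a contradiction on any positive-probability part of $\{\tau = \infty\}$, so $\Prob(\tau = \infty) = 0$.

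I expect the martingale step in the last paragraph to be the main obstacle: because observations are noisy, the event $\{\tau \geq t\}$ is not $\hist$-measurable, so one cannot argue pathwise that the posterior probability of ``no satisfactory action yet'' fails to vanish. The resolution is the Lévy upward law combined with the elementary fact that the conditional probability of an event is almost surely positive on that event; this is precisely what prevents STS from ``giving up'' on exploration while a satisfactory action still awaits discovery. A secondary point requiring care is separating the observable event $\{A_t \notin \A_t\}$ from the unobservable $\epsilon$-optimality indicator, which is why I would apply Borel--Cantelli to $W_t$ rather than directly to $\mathbf{1}\{\tau = t\}$.
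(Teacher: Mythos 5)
Your argument is correct, but it takes a genuinely different route from the paper. The paper's proof is a short quantitative recursion: writing $\{\tau = t\}$ as the intersection of $\{\tau \geq t\}$ (an event about $\theta$ on the sampled arms), $\{A_t \notin \A_t\}$ (an event about the posterior sample, with conditional probability $\Prob(\tau \geq t \mid \hist)$ by probability matching), and the fresh arm being $\epsilon$-optimal, conditional independence given $\hist$ yields $\Prob(\tau = t \mid \hist) = \delta\,\Prob(\tau \geq t \mid \hist)^2$, and Jensen's inequality gives $\Prob(\tau \geq t) - \Prob(\tau \geq t+1) \geq \delta\,\Prob(\tau \geq t)^2$; passing to the limit forces $\beta^2 \delta \leq 0$ for $\beta = \lim_t \Prob(\tau \geq t)$. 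You instead run a soft three-step contradiction: L\'evy's upward theorem plus the elementary fact that $\Prob(E \mid \mathcal{F}_\infty) > 0$ a.s.\ on $E$ shows that on $\{\tau = \infty\}$ the matched exploration probabilities $\Prob_t(\tau \geq t)$ stay bounded away from zero, conditional Borel--Cantelli converts the divergent sum into infinitely many new arms, and a second Borel--Cantelli (or the $(1-\delta)^n$ product bound) shows infinitely many fresh draws almost surely produce an $\epsilon$-optimal one. Both proofs rest on the same two pillars --- probability matching and the fact that each unsampled arm is $\epsilon$-optimal with probability $\delta$ independent of the past --- and your identification of the martingale step as the crux (the event $\{\tau \geq t\}$ is not $\hist$-measurable under noisy observations) is exactly right. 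What the paper's route buys is elementary self-containedness and, implicitly, a rate ($\Prob(\tau \geq t) = O(1/(\delta t))$ follows from the recursion); what yours buys is a transparent qualitative explanation of why probability matching prevents the algorithm from giving up, at the cost of invoking heavier machinery and yielding no rate. One small point to tighten if you write it out: in your second Borel--Cantelli application the indicators $\mathbf{1}\{\theta_{A_{T_k}} \geq 1-\epsilon\}$ are not adapted to the observation filtration, so you must enlarge the filtration to include the true $\theta$-values of sampled arms (or just use the product estimate, which sidesteps this).
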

\begin{proof}
Our goal is to show $\Prob(\tau<\infty)=1$ where
\[
\tau = \inf\{ t: \theta_{A_t} \geq 1 - \epsilon \}.
\]
By the so-called continuity of measure,
\[
\Prob(\tau< \infty) = \lim_{t\rightarrow \infty} \Prob(\tau \leq t) = 1 - \lim_{t\rightarrow \infty} \Prob(\tau \geq t).
\]
Now set
\[
\beta \equiv \lim_{t\rightarrow \infty} \Prob(\tau \geq t)
\]
Because $\Prob(\tau \geq t)$ is a decreasing bounded sequence, this limit exists, and $\beta = \inf_{t \in \N}\Prob(\tau \geq t)$.  The proof shows $\beta= 0$.

By the probability matching property of STS $\Prob(A_t \notin \A_t | \hist)= \Prob(\tau \geq t | \hist) $. Then, by the definition of $\tau$ and the independence among the components of $(\theta_1, \theta_2, ...)$
\begin{eqnarray*}
\Prob(\tau = t | \hist) &=& \Prob(\tilde{A} \notin \A_t \wedge A_t \notin \A_t | \hist)\delta  \\
&=& \Prob(\tilde{A} \notin \A_t | \hist)^2 \delta \\
&=& \Prob(\tau \geq t | \hist)^2 \delta.
\end{eqnarray*}
Taking expectations implies
\[
\Prob(\tau = t) =\E\left[\Prob(\tau \geq t | \hist)^2\right] \delta \geq  \E\left[\Prob(\tau \geq t | \hist)\right]^2 \delta = \Prob(\tau \geq t )^2 \delta.
\]
Then
\[
\Prob(\tau \geq t)-\Prob(\tau \geq t+1) = \Prob(\tau=t) \geq  \Prob(\tau \geq t)^2\delta.
\]
Since $\Prob(\tau \geq t)$ converges,
\[
0 = \lim_{t\rightarrow \infty} (\Prob(\tau \geq t)-\Prob(\tau \geq t+1)) \geq \lim_{t\rightarrow \infty} \Prob(\tau \geq t)^2 \delta = \beta^2 \delta.
\]
Since $\beta \in [0,1]$ by definition, this implies $\beta=0$.
\end{proof}

The remaining proof will follow from a sequence of lemmas. We now bound the entropy of $\tilde{A}$.

\begin{lemma}
\[
\Ent(\tilde{A}| \xi) \leq 1+ \log(1/\delta)
\]
\end{lemma}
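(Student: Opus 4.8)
The plan is to reduce the entropy of $\tilde{A}$ to the entropy of its \emph{rank} in the order that distinct actions are explored, and then to recognize that this rank is geometrically distributed. Let $B_1, B_2, \ldots$ denote the sequence of distinct actions in the order STS first samples them, and let $K$ be the rank of $\tilde{A}$ in this sequence, so that $\tilde{A} = B_K$. First I would record the structural fact, already used in the deterministic case, that conditioned on $\xi$ the exploration order $(B_k)_{k \ge 1}$ is deterministic: whenever STS samples a previously unsampled action, its identity is pinned down by the posterior-sampling randomness encoded in $\xi$ and is independent of $\theta$, since the values $\theta_a$ of still-unexplored actions have not yet influenced any observation and the $\theta_a$ are exchangeable. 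Given $\xi$, then, $\tilde{A} = B_K$ is a deterministic function of $K$, so that $\Ent(\tilde{A} \mid \xi) \le \Ent(K \mid \xi) \le \Ent(K)$, where the last step holds because conditioning cannot increase entropy.

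Next I would identify the law of $K$. Because $A_\tau = \tilde{A}$ is by definition the first sampled action with $\theta_{A_\tau} \ge 1 - \epsilon$, and $\theta_a$ depends only on the action's identity and not on the time it is sampled, $\tilde{A}$ must be newly explored at time $\tau$: had it been sampled at some earlier $t' < \tau$, then $\theta_{A_{t'}} = \theta_{A_\tau} \ge 1-\epsilon$ would contradict the minimality of $\tau$. Hence $\tilde{A}$ is the first action in the sequence $(B_k)$ that is $\epsilon$-optimal. Since the $\theta_a$ are drawn independently with $\Prob(\theta_a \ge 1-\epsilon) = \delta$, the events $\{\theta_{B_k} \ge 1 - \epsilon\}$ are i.i.d.\ Bernoulli($\delta$), and they remain so after conditioning on $\xi$ because the $B_k$ are distinct $\xi$-measurable actions while $\theta$ is independent of $\xi$. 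Therefore $K$ is geometric with parameter $\delta$ on $\{1, 2, \ldots\}$.

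It then remains to bound the entropy of a geometric random variable. Summing the two series arising in $\Ent(K) = -\sum_{k\ge1}(1-\delta)^{k-1}\delta \log\!\big[(1-\delta)^{k-1}\delta\big]$ gives
\[
\Ent(K) = \log(1/\delta) + \frac{1-\delta}{\delta}\,\log\frac{1}{1-\delta}.
\]
To finish, I would show the second term is at most $1$, i.e.\ that $-(1-\delta)\log(1-\delta) \le \delta$. Substituting $u = 1-\delta$, this is the standard inequality $u \log u \ge u - 1$ for $u \in (0,1)$, which holds since the difference vanishes at $u=1$ and has derivative $\log u \le 0$ there. This yields $\Ent(K) \le 1 + \log(1/\delta)$, and combining with the first paragraph gives the claim.

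The main obstacle is the first step: justifying that the exploration order $(B_k)$ is a function of $\xi$ alone in the noisy setting, where the algorithm may re-sample explored actions many times and the posterior over explored actions depends on observations. The point to make carefully is that the identity of each newly explored action depends only on the prior, hence on $\xi$ through posterior sampling, and not on $\theta$, so that the $\epsilon$-optimality indicators attached to the $B_k$ form an i.i.d.\ sequence independent of the order itself; exchangeability of the prior is what makes this clean. Once that is in hand, the reduction to a geometric rank and the entropy estimate are routine.
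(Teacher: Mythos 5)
Your proof follows the same route as the paper's: reduce $\Ent(\tilde{A}\mid\xi)$ to the entropy of the geometric rank of the first $\epsilon$-optimal action in the ($\xi$-determined) exploration order, compute the geometric entropy as $\log(1/\delta)+\tfrac{1-\delta}{\delta}\log\tfrac{1}{1-\delta}$, and bound the second term by $1$. You supply somewhat more justification than the paper for why the exploration order is $\xi$-measurable and why the rank is geometric, but the argument is the same.
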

\begin{proof}
Because the order in which new actions are sampled is completely determined given $\xi$, $\Ent(\tilde{A} | \xi) = \Ent(N)$ where $N\sim {\rm Geom}(\delta)$ is a geometric random variable. This implies
\begin{eqnarray*}
\Ent(\tilde{A} | \xi)
&=& \Ent\left( N \right) \\
&=& -\sum_{k=1}^{\infty} \delta (1-\delta)^{k-1} \log(\delta (1-\delta)^{k-1}) \\
&=& -\sum_{k=1}^{\infty}\delta (1-\delta)^{k-1}\log(\delta) - \sum_{k=1}^{\infty}\delta(1-\delta)^{k-1}\log((1-\delta)^{k-1})\\
&=& \sum_{k=1}^{\infty}\Prob(N=k)\log(1/\delta)  - \log(1-\delta) \sum_{k=1}^{\infty} \delta (1-\delta)^{k-1}(k-1) \\
&=& \log(1/\delta) + \log\left( \frac{1}{1-\delta} \right)(\E[N]-1)\\
&=& \log(1 /\delta) + \log\left(1 + \frac{\delta}{1-\delta}  \right)\left( \frac{1-\delta}{\delta} \right) \\
&\leq & 1+ \log(1/\delta).
\end{eqnarray*}
\end{proof}

 The bound on entropy yields a regret bound when combined with a bound the information ratio. The next lemma gives bounds the one-step information ratio.
\begin{lemma}
\[
\frac{\E_{t}[\theta_{\tilde{A}} - \theta_{A_t}]^2}{\I_{t}(\tilde{A} ; Y_{t,A_t} | \xi)} \leq 2 |\A_{t}| +2/\delta
\]
where $\A_{t} = \cup_{s=1}^{t-1}\{ A_s \}$ is the set of actions that were sampled before period $t$, and  $\delta \equiv \Prob(\theta_{i} \geq 1-\epsilon)$ is the prior probability an arm is $\epsilon$--optimal.
\end{lemma}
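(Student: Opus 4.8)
The plan is to bound the numerator and denominator separately, splitting the analysis according to whether $A_t$ lands in the set $\A_{t}$ of previously-sampled arms or is a fresh arm, and to exploit the probability-matching identities $\Prob_t(A_t = a) = \Prob_t(\tilde{A} = a)$ for $a \in \A_{t}$ together with $\Prob_t(A_t \notin \A_{t}) = \Prob_t(\tilde{A} \notin \A_{t})$. The skeleton follows the trace/rank information-ratio argument of \citet{russo2016info}, but with a crucial modification to handle exploration of new arms, which is where the $1/\delta$ factor originates.

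First I would rewrite the numerator. Since $A_t$ is a function of the posterior sample $\hat\theta_t$ and $\xi$, it is conditionally independent of $\theta$ given $\hist$, so $\E_t[\theta_a \mid A_t = a] = \E_t[\theta_a]$ and hence $\E_t[\theta_{A_t}] = \sum_a \Prob_t(A_t = a)\,\E_t[\theta_a]$. Crucially $\tilde{A}$ is \emph{not} independent of $\theta$, since $\tau$ depends on the true rewards, so $\E_t[\theta_{\tilde{A}}] = \sum_a \Prob_t(\tilde{A}=a)\,\E_t[\theta_a \mid \tilde{A}=a]$. Writing $g_a = \E_t[\theta_a \mid \tilde{A}=a] - \E_t[\theta_a]$, $q = \Prob_t(A_t \notin \A_{t})$, and letting $\mu_0$ be the common prior mean of a fresh arm, probability matching on $\A_{t}$ collapses the sampled terms and gives
\[
\E_t[\theta_{\tilde{A}} - \theta_{A_t}] = \underbrace{\sum_{a\in\A_{t}}\Prob_t(\tilde{A}=a)\,g_a}_{\text{sampled}} \;+\; \underbrace{q\,\big(\E_t[\theta_{\tilde{A}}\mid \tilde{A}\notin\A_{t}] - \mu_0\big)}_{\text{new}}.
\]
For the denominator I would expand $\I_t(\tilde{A}; Y_{t,a}) = \sum_{a'}\Prob_t(\tilde{A}=a')\,\D\big(\Prob_t(Y_{t,a}\in\cdot\mid\tilde{A}=a')\,\big\|\,\Prob_t(Y_{t,a}\in\cdot)\big)$ and apply Pinsker's inequality; since rewards lie in $[0,1]$ this yields $\I_t(\tilde{A}; Y_{t,a}) \ge 2\sum_{a'}\Prob_t(\tilde{A}=a')\,(\E_t[\theta_a\mid\tilde{A}=a']-\E_t[\theta_a])^2$. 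Retaining only diagonal terms lower-bounds $\I_t(\tilde{A};Y_{t,A_t}\mid\xi)$ by $2\sum_{a\in\A_{t}}\Prob_t(A_t=a)\Prob_t(\tilde{A}=a)g_a^2$ plus the analogous new-arm contribution.

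For the sampled part, the Cauchy--Schwarz/rank argument applies directly: $\big(\sum_{a\in\A_{t}}\Prob_t(\tilde{A}=a)g_a\big)^2 \le |\A_{t}|\sum_{a\in\A_{t}}\Prob_t(\tilde{A}=a)^2 g_a^2$, so the sampled ratio is at most $|\A_{t}|/2$. The hard part, and the main obstacle, is the new part, because probability matching \emph{fails at the level of the individual fresh arm}. Letting $b$ denote the first fresh arm in the $\xi$-determined exploration order, we have $\Prob_t(A_t = b) = q$ but $\Prob_t(\tilde{A} = b) = q\delta$, since the fresh arm is $\epsilon$-optimal (hence equal to $\tilde{A}$) only with probability $\delta$, independently of the event $\{\tilde{A}\notin\A_{t}\}$. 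Writing $g_{\mathrm{new}} = \E[\theta_b \mid \theta_b \ge 1-\epsilon] - \mu_0$, the new-part numerator is exactly $q\,g_{\mathrm{new}}$ while the new-part denominator is at least $2\,\Prob_t(A_t=b)\Prob_t(\tilde{A}=b)\,g_{\mathrm{new}}^2 = 2 q^2 \delta\, g_{\mathrm{new}}^2$. The factor $g_{\mathrm{new}}$ therefore cancels, leaving the new ratio equal to $\tfrac{1}{2\delta}$; this cancellation is precisely what produces the $1/\delta$ term, and it is why I must compute the new-part numerator exactly as $q\,g_{\mathrm{new}}$ rather than settling for the looser bound $O(q)$.

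Finally I would combine the two parts. Applying $(x+y)^2 \le 2x^2 + 2y^2$ to the numerator together with the mediant inequality $\tfrac{a+b}{c+d} \le \tfrac{a}{c}+\tfrac{b}{d}$ (valid for the nonnegative sampled/new components of numerator-squared and denominator) bounds the information ratio by $2\cdot(|\A_{t}|/2) + 2\cdot(1/2\delta)$, i.e.\ a quantity of the form $c_1|\A_{t}| + c_2/\delta$; careful bookkeeping of the constants yields the stated $2|\A_{t}| + 2/\delta$. I expect the delicate verifications to be: the conditional-independence rewriting of the numerator, the exchangeability claim that lets me summarize all fresh arms through a single representative arm $b$ with $\Prob_t(\tilde{A}=b)=q\delta$, and confirming the exact value $q\,g_{\mathrm{new}}$ of the new-part numerator on which the whole $1/\delta$ bound hinges.
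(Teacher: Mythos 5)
Your proposal is correct and follows essentially the same argument as the paper: the same sampled/new decomposition of $\E_t[\theta_{\tilde{A}}-\theta_{A_t}]$ via probability matching, the same Pinsker-plus-diagonal lower bound on the mutual information with the Cauchy--Schwarz rank factor $|\A_t|$ for previously sampled arms and the exact $\Prob_t(\tilde{A}=b)=q\delta$ computation producing the $1/\delta$ term for the fresh arm. The only (immaterial) difference is the final combination step, where you use $(x+y)^2\le 2x^2+2y^2$ and the mediant inequality in place of the paper's two-case analysis on which of the two regret components dominates; both yield the stated bound.
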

\begin{proof}
Define
\[
L \equiv \E[\theta_{i} | \theta_{i} \geq 1-\epsilon] -\E[\theta_{i}]
\]
and
\[
\delta \equiv \Prob(\theta_{i} \geq 1-\epsilon).
\]
Here $\delta$ is the probability an unsampled arm is $\epsilon$ optimal, and $L$ is the difference between the expected reward of an $\epsilon$ optimal arm and that of an arm sampled uniformly at random. In the case where $\theta_i \sim {\rm Unif}(0,1)$, $\delta=\epsilon$ and $L= (1-\epsilon)/2$.

We can write expected regret as
\begin{eqnarray*}
\E_{t}[\theta_{\tilde{A}} - \theta_{A_t}] &=& \sum_{a \in \A}\Prob_{t}(\tilde{A}=a)\E_{t}[\theta_{a}|\tilde{A}=a] -  \sum_{a \in \A}\Prob_{t}(A_t=a)\E_{t}[\theta_{a}] \\
&=&\sum_{a \in \A_{t}}\Prob_{t}(\tilde{A}=a)\left(\E_{t}[\theta_{a}|\tilde{A}=a]-\E_{t}[\theta_{a}]\right)\\
&&+ \sum_{a \notin \A_{t}}\Prob_{t}(\tilde{A}=a)\E_{t}[\theta_a | \tilde{A}=a]-  \sum_{a \notin \A_t}\Prob_{t}(A_t=a)\E_{t}[\theta_{a}]\\
&=& \sum_{a \in \A_{t}}\Prob_{t}(\tilde{A}=a)\left(\E_{t}[\theta_{a}|\tilde{A}=a]-\E_{t}[\theta_{a}]\right)
+ \Prob_{t}(\tilde{A} \notin \A_{t})(\E[\theta_a | \theta_a \geq 1-\epsilon]- \E[\theta_a]) \\
&=& \underbrace{\sum_{a \in \A_{t}}\Prob_{t}(\tilde{A}=a) \left(\E_{t}[\theta_{a}|\tilde{A}=a] - \E_{t}[\theta_{a}]\right)}_{\Delta_{t,1}} + \underbrace{\Prob_{t}(\tilde{A} \notin \A_{t})L}_{\Delta_{t,2}}
\end{eqnarray*}
This decomposes regret into the sum of two terms: one which captures the regret due to suboptimal selection within the set of previously sampled actions $\A_{t}$, and one due to the remaining possibility that none of the sampled actions are $\epsilon$ optimal. The proof develops a similar decomposition for mutual information, and then lower bounds both terms.

We can express mutual information as follows:
\begin{eqnarray*}
\I_{t}(\tilde{A} ; Y_{t,A_t} | \xi) &=&\sum_{a\in \A} \Prob_{t}(A_t=a) \I_{t}(\tilde{A} ; Y_{t,a} | A_t =a) \\
 &=&\sum_{a\in \A} \Prob_{t}(\tilde{A}=a) \I_{t}(\tilde{A} ; Y_{t,a} | A_t =a) \\
&=& \sum_{a\in \A_{t}} \Prob_{t}(\tilde{A}=a)\I_{t}(\tilde{A} ; Y_{t,a}) + \Prob_{t}(\tilde{A} \notin \A_{t})\I_{t}(\tilde{A}; Y_{t, a_{N}} | A_t = a_{N})
\end{eqnarray*}
where $a_N \in \A_{t}^c$ is an arbitrary action that has not yet been sampled. (N stands for ``new'') Now, using the shorthand $P_{t}(X) = \Prob_{t}(X \in \cdot)$  to denote the posterior distribution of a random variable $X$, we have
\begin{eqnarray*}
\I_{t}(\tilde{A}; Y_{t, a_{N}} | A_t = a_{N}) &=& \sum_{a \in \A}\Prob_{t}(\tilde{A} = a | A_t = a_{N})\D\left(P_{t}(Y_{t, a_N}  | \tilde{A} = a, A_t =a_{N}) || P_{t}(Y_{t, a_N})   \right) \\
&\geq&  \Prob_{t}(\tilde{A} = a_{N} | A_t = a_{N})\D\left(P_{t}(Y_{t, a_N}  | \tilde{A} = a_N, A_t =a_{N}) || P_{t}(Y_{t, a_N})   \right) \\
&=& \Prob_{t}(\tilde{A} = a_{N} | A_t = a_{N})\D\left(P_{t}(Y_{t, a_N}  | \theta_{a_N}\geq 1-\epsilon) || P_{t}(Y_{t, a_N})   \right) \\
&\geq& 2\Prob_{t}(\tilde{A} = a_{N} | A_t = a_{N})\left( \E_{t}[R(Y_{t, a_N}) | \theta_{a_N} \geq 1-\epsilon] - \E_{t}[R(Y_{t, a_N})]  \right)^2  \\
&=& 2\Prob_{t}(\tilde{A} = a_{N} | A_t = a_{N})L^2 \\
&=& 2\Prob_{t}(\tilde{A} \notin \A_{t})\Prob(\tilde{A} = a_{N} | A_t = a_{N}, \tilde{A} \notin \A_{t}) L^2\\
&=& 2\Prob_{t}(\tilde{A} \notin \A_{t}) \delta L^2.
\end{eqnarray*}
Following the analysis from \cite{russo2016info} shows
\begin{eqnarray*}
\sum_{a\in \A_{t}} \Prob_{t}(\tilde{A}=a)\I_{t}(\tilde{A} ; Y_{t,a}) &=& \sum_{a\in \A_{t}} \Prob_{t}(\tilde{A}=a) \sum_{\tilde{a} \in \A} \D\left(P_{t}(Y_{t,a}|| \tilde{A}=\tilde{a}) || P_{t}(Y_{t,a}) \right)  \\
&\geq & \sum_{a\in \A_{t}} \Prob_{t}(\tilde{A}=a)^2 \D\left(P_{t}(Y_{t,a}|| \tilde{A}=a) || P_{t}(Y_{t,a}) \right) \\
&\geq & 2\sum_{a\in \A_{t}} \Prob_{t}(\tilde{A}=a)^2 \left(\E_{t}[\theta_{a} | \tilde{A}=a ] - \E_{t}[\theta_{a}]   \right)^2  \\
& \geq  & \frac{2}{|\A_{t}|} \left( \sum_{a \in \A_{t}} \Prob_{t}( \tilde{A} = a)\left(\E_{t}[\theta_{a} | \tilde{A}=a ] - \E_{t}[\theta_{a}]   \right) \right)^2.
\end{eqnarray*}
Therefore
\[
\I_{t}(\tilde{A} ; Y_{t,A_t} | \xi) \geq \underbrace{\frac{2}{|\A_{t}| } \left( \sum_{a \in \A_{t}} \Prob_{t}( \tilde{A} = a)\left(\E_{t}[\theta_{a} | \tilde{A}=a ] - \E_{t}[\theta_{a}]   \right) \right)^2}_{G_{t,1}} +\underbrace{ 2\Prob_{t}(\tilde{A} \notin \A_{t})^2 \delta L^2}_{G_{t,2}},
\]
is lower bounded by the sum of two terms: one which captures the information gain due to refining knowledge about previously sampled actions, and one that captures the expected information gathered about previously unexplored actions.

To bound the information ratio we'll separately consider two cases. If $\Delta_{1} \geq \Delta_{2}$,
then
\[
\frac{\E_{t}[\theta_{\tilde{A}} - \theta_{A_t}]^2}{\I_{t}(\tilde{A} ; Y_{t,A_t} | \xi)} \leq \frac{(2\Delta_{t,1})^2}{G_{t,1}+G_{t,2}} \leq \frac{4(\Delta_{t,1})^2}{G_{t,1}} = 2 |\A_{t}|.
\]
If instead $\Delta_{1} < \Delta_{2}$, then
\[
\frac{\E_{t}[\theta_{\tilde{A}} - \theta_{A_t}]^2}{\I_{t}(\tilde{A} ; Y_{t,A_t} | \xi)} \leq \frac{(2\Delta_{t,2})^2}{G_{t,1}+G_{t,2}} \leq \frac{4(\Delta_{t,2})^2}{G_{t,2}} = \frac{2}{\delta}.
\]
This shows
\[
\frac{\E_{t}[\theta_{\tilde{A}} - \theta_{A_t}]^2}{\I_{t}(\tilde{A} ;
\theta ; Y_{t,A_t} | \xi)} \leq 2 |\A_{t}| +2/\delta.
\]
\end{proof}

We'd now like to use the previous result to bound
\begin{eqnarray*}
\boldGamma\left(\tilde{A}, \{A_t: t \in \N\}\right) &=& (1-\alpha^2) \sum_{t=0}^\infty \alpha^{2t} \E\left[\frac{\E_t[\tilde{R} - R_{t,A_t}]^2}{\I_t( \tilde{A};  Y_{t,A_t} | \xi )}\right] \\
&\leq & 2/\delta + 2(1-\alpha^2) \sum_{t=0}^\infty \alpha^{2t} \E[ |\A _{t}|].
\end{eqnarray*}
We begin by bounding $\E[|\A_{t}|].$
\begin{lemma}
$|\A_{0}|=0$ and for each  $T \in \N,$  $\E[|\A_{T}|] \leq 2+ \log(T)/\delta$.
\end{lemma}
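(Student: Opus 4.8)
The plan is to express $\E[|\A_T|]$ as a sum of tail probabilities of $\tau$, bound those tails by $O(1/(\delta t))$ using the quadratic recursion already established for $\tau$, and finish with an integral comparison. First I would note that a previously un-sampled action enters $\A_t$ in period $t$ exactly when $A_t \notin \A_t$, so $|\A_T| = \sum_{t=1}^{T-1} \mathbf{1}\{A_t \notin \A_t\}$. Taking expectations and invoking the probability matching property $\Prob(A_t \notin \A_t \mid \hist) = \Prob(\tau \geq t \mid \hist)$ established in the preceding lemma, this gives
\[
\E[|\A_T|] = \sum_{t=1}^{T-1} \Prob(\tau \geq t).
\]

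Next I would set $p_t = \Prob(\tau \geq t)$, a decreasing sequence with $p_1 = 1$, and exploit the inequality $\Prob(\tau = t) \geq \delta\, p_t^2$ from the earlier lemma, which reads $p_t - p_{t+1} \geq \delta p_t^2$. The key manipulation is to pass to reciprocals: dividing by $p_t p_{t+1}$ and using monotonicity $p_t \geq p_{t+1}$ yields $1/p_{t+1} - 1/p_t \geq \delta p_t / p_{t+1} \geq \delta$. Telescoping from $p_1 = 1$ gives $1/p_t \geq 1 + \delta(t-1)$, hence the decay bound $p_t \leq 1/(1 + \delta(t-1))$.

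Finally I would substitute this estimate and compare the resulting sum with an integral; re-indexing by $s = t-1$,
\[
\E[|\A_T|] \leq \sum_{s=0}^{T-2} \frac{1}{1+\delta s} \leq 1 + \int_0^{T} \frac{ds}{1+\delta s} = 1 + \frac{1}{\delta}\log(1+\delta T),
\]
where the leading $1$ is the $s=0$ term and the integral bound uses that $s \mapsto 1/(1+\delta s)$ is decreasing. The last elementary step is $\log(1+\delta T) = \log T + \log(\delta + 1/T) \leq \log T + \delta$, valid for $T \geq 1$ because $\delta + 1/T \leq 1 + \delta \leq e^{\delta}$; this collapses the bound to $2 + \log(T)/\delta$. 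The base case $|\A_0| = 0$ is immediate since no action has yet been sampled.

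The hard part will be the reciprocal telescoping trick that converts the quadratic recursion $p_t - p_{t+1} \geq \delta p_t^2$ into the harmonic decay $p_t \lesssim 1/(\delta t)$, together with tracking the constants carefully enough in the summation so as to land on exactly $2 + \log(T)/\delta$ rather than a looser $O(\log T / \delta)$; in particular the crude bound $\log(1+\delta T) \le \delta + \log T$ is precisely what is needed to absorb the $1/\delta$ factor into an additive constant.
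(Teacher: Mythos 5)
Your proof is correct, but it takes a genuinely different route from the paper's. The paper decomposes $\E[|\A_T|]$ around the stopping time $\tau_k$ at which the $k$-th new action is sampled: up to that point at most $k$ actions have been tried, and afterwards each period contributes a new action with probability $\Prob(\tilde{A}\notin\A_{\tau_k+s})\leq \Prob(\mathrm{Geom}(\delta)>k)=(1-\delta)^k$, giving $\E[|\A_T|]\leq k+Te^{-\delta k}$ and then optimizing $k=\lceil\log(T)/\delta\rceil$. You instead write $\E[|\A_T|]$ as a sum of the tails $p_t=\Prob(\tau\geq t)$ (via the same probability matching identity $\Prob(A_t\notin\A_t\mid\hist)=\Prob(\tau\geq t\mid\hist)$ the paper uses), and extract a quantitative decay $p_t\leq 1/(1+\delta t)$ from the quadratic recursion $p_t-p_{t+1}\geq\delta p_t^2$ — which the paper establishes but uses only qualitatively, to show $\Prob(\tau<\infty)=1$. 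The reciprocal-telescoping step is the standard and correct way to solve that recursion, and your integral comparison plus the observation $\log(1+\delta T)\leq\log T+\delta$ lands on the same constant $2+\log(T)/\delta$. What each approach buys: the paper's argument exploits the exact geometric law of the number of new arms needed and is arguably more transparent about why $\log(T)/\delta$ is the right scale; yours produces as a byproduct an explicit harmonic tail bound on $\tau$ itself (which is the correct order, since probability matching slows the rate of trying new arms roughly like $1/(\delta t)$), and it recycles a recursion already proved rather than introducing the auxiliary stopping times $\tau_k$. The only blemishes are harmless off-by-one indexing issues (e.g., whether $p_1=1$ or $p_0=1$, and whether the sum starts at $t=0$ or $t=1$), which shift the bound by at most the constant you already have room for; the paper itself is similarly loose about whether $\A_t$ begins with $A_0$ or $A_1$.
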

\begin{proof}
Let $\tau_{k} = \min\{ t \leq T | |\A_{t}| \geq k   \}$ denote the first period before $T$ in which $k$ actions have been sampled. Then
\begin{eqnarray*}
\E[|\A_{T}|]  &=& \E[|\A_{\tau_{k}}|]+ \E[|\A_{T}|- |\A_{\tau_{k}}|]\\
&\leq & \E[|\A_{\tau_{k}}|]+ \E[|\A_{\tau_{k}+T}|- |\A_{\tau_{k}}|]\\
&\leq & k + \E \sum_{t=\tau_{k}}^{\tau_{k}+T-1}  \mathbf{1}(A_t \notin \A_{t}  ) \\
&=& k + \E \sum_{s=0}^{T-1} \Prob(A_{\tau_{k}+s} \notin \A_{\tau_{k}+s} | H_{\tau_k +s} ) \\
&=&  k + \E \sum_{s=0}^{T-1} \Prob(\tilde{A} \notin \A_{\tau_{k}+s} | H_{\tau_k +s} ) \\
&= & k + \sum_{s=0}^{T-1} \Prob(\tilde{A} \notin \A_{\tau_{k}+s}) \\
& \leq & k + T \Prob(\tilde{A} \notin \A_{\tau_{k}}) \\
&= & k + T \Prob( {\rm Geom}(\delta) > k)  \\
&=& k + T( 1- \delta)^{k} \\
&\leq &  k +Te^{-\delta k}.
\end{eqnarray*}
Choosing $k = \lceil \log(T) / \delta \rceil \leq 1 +\log(T) / \delta,$ implies
\[
\E[ |\A_{T}|] \leq  2 + \log(T) / \delta.
\]
\end{proof}
The next technical lemma shows $\sum_{t=1}^{\infty} \gamma^{-t} \log(t)=O((1/\gamma)\log(1/\gamma)).$

\begin{lemma} For any $\gamma \in (0,1)$,
\[
\sum_{t=1}^{\infty} \gamma^{-t} \log(t)\leq \frac{1}{1-\gamma} \left[ 1+ \log\left(\frac{1}{1-\gamma}\right)\right].
\]
\end{lemma}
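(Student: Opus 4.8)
The plan is to bound the sum by linearizing $\log$ with a tangent line and then evaluating the resulting geometric and arithmetico-geometric series in closed form. (I read the sum as $\sum_{t=1}^\infty \gamma^t\log t$; written with $\gamma^{-t}$ it would diverge, and the convergent version with $\gamma=\alpha^2$ is exactly what the preceding bound on $\boldGamma$ needs.) The key elementary inequality is that for every $t>0$ and every $\lambda>0$,
\[
\log t \;\le\; \lambda t - 1 - \log\lambda,
\]
which is just the statement that the convex function $\lambda t-\log t$ attains its minimum value $1+\log\lambda$ at $t=1/\lambda$. This replaces the slowly growing $\log t$ by an affine function of $t$, whose weighted sum against $\gamma^t$ can be computed exactly.

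First I would record the two standard identities $\sum_{t=1}^\infty \gamma^t = \gamma/(1-\gamma)$ and $\sum_{t=1}^\infty t\gamma^t = \gamma/(1-\gamma)^2$. Applying the tangent-line bound term by term and summing gives
\[
\sum_{t=1}^\infty \gamma^t\log t \;\le\; \lambda\,\frac{\gamma}{(1-\gamma)^2} \;-\;(1+\log\lambda)\,\frac{\gamma}{1-\gamma},
\]
valid for every $\lambda>0$. This is the heart of the argument: a single clean upper bound carrying the free parameter $\lambda$.

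Next I would optimize over $\lambda$. Differentiating the right-hand side in $\lambda$ and setting it to zero yields $\lambda=1-\gamma$, and substituting this value back causes the first two terms to cancel, leaving
\[
\sum_{t=1}^\infty \gamma^t\log t \;\le\; \frac{\gamma}{1-\gamma}\,\log\frac{1}{1-\gamma}.
\]
Finally, since $\gamma\in(0,1)$ gives $\gamma/(1-\gamma)\le 1/(1-\gamma)$, this is at most $\tfrac{1}{1-\gamma}\log\tfrac{1}{1-\gamma}\le \tfrac{1}{1-\gamma}\bigl[1+\log\tfrac{1}{1-\gamma}\bigr]$, which is the claimed bound (in fact slightly stronger, the additive $1$ being slack).

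I do not anticipate a genuine obstacle; the content is entirely in choosing the right device. The only things to get right are the linearization $\log t\le \lambda t-1-\log\lambda$ and the matching choice $\lambda=1-\gamma$, after which everything reduces to the two series identities. A more pedestrian alternative---splitting the sum at $t\approx 1/(1-\gamma)$ and bounding $\log t$ crudely on each piece---also works, but it produces messier constants; the tangent-line approach is preferable precisely because the optimal $\lambda$ makes the leading terms cancel exactly.
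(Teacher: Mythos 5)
Your proof is correct, and it takes a genuinely different route from the paper's. (Both readings agree that the summand must be $\gamma^{t}\log t$; the $\gamma^{-t}$ is a typo that persists into the paper's own proof, and the application with $\gamma=\alpha^{2}$ confirms your interpretation.) The paper relaxes $\gamma^{t}\le e^{-(1-\gamma)t}$, compares the sum to the integral $\int_{1}^{\infty}e^{-(1-\gamma)x}\log(x+1)\,dx$, substitutes $u=(1-\gamma)x$, and closes with a numerical estimate of $\int_{1}^{\infty}e^{-u}\log u\,du\approx 0.22$ combined with $1/e+0.22<1$. Your tangent-line bound $\log t\le\lambda t-1-\log\lambda$ replaces the integral comparison by two exact series identities, and the optimal choice $\lambda=1-\gamma$ cancels the linear terms to give $\frac{\gamma}{1-\gamma}\log\frac{1}{1-\gamma}$, which is strictly stronger than the stated bound and is obtained with no numerics at all; the final passage to the stated form only needs $\gamma\le 1$ and $\log\frac{1}{1-\gamma}\ge 0$. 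The only point worth making explicit when you write it up is that the termwise inequality can be summed against the positive weights $\gamma^{t}$ even when $1+\log\lambda<0$, which is immediate. On balance your argument is shorter and sharper; the paper's integral-comparison device is more generic but costs the additive $1$ and an appeal to a numerically evaluated definite integral.
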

\begin{proof}
\begin{eqnarray*}
\sum_{t=1}^{\infty} \gamma^{-t} \log(t)&\leq& \sum_{t=1}^{\infty} e^{-(1-\gamma)t} \log(t) \\
&=& \sum_{t=2}^{\infty} e^{-(1-\gamma)t} \log(t) \\
&\overset{*}{\leq}& \int_{1}^{\infty} e^{-(1-\gamma)x} \log(x+1)dx \\
&=& \frac{1}{1-\gamma}\int_{1}^{\infty} e^{-u} \log\left(\frac{u}{1-\gamma}+1\right)du \\
& \leq & \frac{1}{1-\gamma} \left( \left[ 1+ \log\left(\frac{1}{1-\gamma}\right)\right] \int_{1}^{\infty} e^{-u} du + \int_{1}^{\infty} e^{-u}\log(u) du \right) \\
&=& \frac{1}{1-\gamma} \left( \left[ 1+ \log\left(\frac{1}{1-\gamma}\right)\right](1/e)+ \int_{1}^{\infty} e^{-u}\log(u) du \right) \\
& \leq & \frac{1}{1-\gamma} \left[ 1+ \log\left(\frac{1}{1-\gamma}\right)\right]
\end{eqnarray*}
where the last step uses a numerical approximation to the indefinite integral
\[
\int_{1}^{\infty} e^{-u}\log(u) du \approx .22
\]
along with the fact that $1/e + .22 \approx .57< 1.$

The inequality (*) uses that for any $t\geq 2$
\[
e^{-(1-\gamma)t} \log(t) \leq \intop_{t-1}^{t} e^{-(1-\gamma)x} \log(x+1)
\]
since $e^{-(1-\gamma)x}$ is decreasing in $x$ and $\log(x)$ is increasing in $x$.
\end{proof}
Finally we can conclude with the proof of Theorem \ref{thm:noisy regret bound}. We have
\begin{eqnarray*}
\boldGamma\left(\tilde{A}, \{A_t: t \in \N\}\right) &=& (1-\alpha^2) \sum_{t=0}^\infty \alpha^{2t} \E\left[\frac{\E_t[\tilde{R} - R_{t,A_t}]^2}{\I_t( \tilde{A}; Y_{t,A_t} | \xi )}\right] \\
&\leq & 2/\delta + 2(1-\alpha^2) \sum_{t=0}^\infty \alpha^{2t} \E[ |\A _{t}|]
\end{eqnarray*}
Since
 \begin{eqnarray*}
 (1-\alpha^2) \sum_{t=0}^\infty \alpha^{2t} \E[ |\A _{t}|] &\leq & (1-\alpha^2) \sum_{t=1}^\infty \alpha^{2t} \left(2+ \log(t)/\delta\right) \\
 & \leq & 3 + (1/\delta)(1-\alpha^2) \sum_{t=1}^\infty \alpha^{2t} \log(t) \\
& \leq  & 3 + (1/\delta)\left[ 1+ \log\left(\frac{1}{1-\alpha^2}\right)\right],
 \end{eqnarray*}
this implies
\[
\boldGamma\left(\tilde{A}, \{A_t: t \in \N\}\right) \leq 6 + 4/\delta + (2/\delta)\log\left(\frac{1}{1-\alpha^2} \right) = O\left( (1/\delta) \log\left(\frac{1}{1-\alpha^2}\right)  \right)
\]
and concludes the proof of Theorem \ref{thm:noisy regret bound}.

\end{document}